\documentclass{article}
\usepackage{nunchux,times}

\usepackage{amsmath,amsfonts,bm}

\newcommand{\captiona}{(a)}
\newcommand{\captionb}{(b)}

\def\eqref#1{equation~\ref{#1}}

\def\1{\bm{1}}

\def\rh{{\textnormal{h}}}

\DeclareMathAlphabet{\mathsfit}{\encodingdefault}{\sfdefault}{m}{sl}
\SetMathAlphabet{\mathsfit}{bold}{\encodingdefault}{\sfdefault}{bx}{n}

\usepackage{amsmath}
\usepackage{amsfonts}
\usepackage{amssymb}
\usepackage{amsthm}
\usepackage{bm}
\usepackage{booktabs}
\usepackage{graphicx}
\usepackage{multirow}
\usepackage{wrapfig}
\usepackage{xspace}
\usepackage[font=small,skip=5pt,belowskip=0pt]{caption}
\usepackage{xcolor}
\usepackage{colortbl}
\usepackage{enumitem}
\usepackage{algorithm}
\usepackage{algpseudocode}
\usepackage{tikz}
\usetikzlibrary{arrows.meta,positioning,decorations.pathreplacing,calc,matrix,fit}
\usepackage{listings}
\newtheorem{proposition}{Proposition}[section]

\makeatletter
\providecommand{\onedot}{\futurelet\@let@token\@onedot}
\def\@onedot{\ifx\@let@token.\else.\null\fi\xspace}
\makeatother

\newcommand{\method}{VC-Attention\xspace}
\newcommand{\QK}{\mathbf{QK}}
\newcommand{\PV}{\mathbf{PV}}

\definecolor{revisionblue}{RGB}{51,153,255}

\definecolor{revisionpurple}{RGB}{140,58,193}

\usepackage{CJKutf8}
\newif\ifzhnotes\zhnotestrue
\definecolor{oursrow}{HTML}{E9ECFF}
\newcommand{\ours}{\rowcolor{oursrow}}
\definecolor{prelim}{RGB}{214,109,15}

\newcommand{\na}{\textcolor{black!45}{--}}

\newcommand{\QKHadamardGain}{1.6\texttimes\xspace}
\newcommand{\VErrorShare}{82\%\xspace}

\newcommand{\GroupingStepFraction}{25\%\xspace}
\newcommand{\PermutationReuseSteps}{4\xspace}
\newcommand{\GroupingLatencyLeading}{351.0~s\xspace}
\newcommand{\GroupingLatencyAll}{367.0~s\xspace}
\newcommand{\GroupingWindowSaving}{16~s\xspace}

\newcommand{\NumPrompts}{100\xspace}
\newcommand{\VBenchDimA}{subject consistency\xspace}
\newcommand{\VBenchDimB}{imaging quality\xspace}

\newcommand{\BlockSizeV}{128\xspace}
\newcommand{\ExampleHeadFPEightGain}{1.5\texttimes\xspace}
\newcommand{\BetaExact}{\ensuremath{-0.3443}\xspace}
\newcommand{\BetaUsed}{\ensuremath{-0.35}\xspace}
\newcommand{\VSmoothCostUnamortized}{30\%\xspace}

\newcommand{\ExpCastFidelityCost}{0.7--2.1~dB\xspace}
\newcommand{\HadamardPSNRGain}{0.3~dB\xspace}
\newcommand{\MiniMaxBaselineSpread}{0.6~dB\xspace}

\newcommand{\AttnSpeedupWorkstation}{2.3--3.6\texttimes\xspace}

\newcommand{\AttnSpeedupHTwoHundred}{1.46\texttimes\xspace}
\newcommand{\AttnSpeedupBTwoHundred}{1.59\texttimes\xspace}
\newcommand{\AttnSpeedupRTXPro}{2.27\texttimes\xspace}
\newcommand{\AttnSpeedupRTXFiftyNinety}{3.58\texttimes\xspace}
\newcommand{\WorkstationSageGap}{5\%\xspace}

\newcommand{\SageSpeedupHTwoHundred}{1.16\texttimes\xspace}
\newcommand{\SageSpeedupBTwoHundred}{6.02\texttimes\xspace}

\newcommand{\EndToEndSpeedupBTwoHundred}{1.19\texttimes\xspace}
\newcommand{\EndToEndSpeedupDatacenterRange}{1.13--1.19\texttimes\xspace}
\newcommand{\EndToEndSpeedupWorkstation}{1.36--1.70\texttimes\xspace}
\newcommand{\EndToEndSpeedupHTwoHundred}{1.13\texttimes\xspace}
\newcommand{\EndToEndSpeedupRTXPro}{1.36\texttimes\xspace}
\newcommand{\EndToEndSpeedupRTXFiftyNinety}{1.70\texttimes\xspace}
\newcommand{\VCPSNRGainEightBit}{1.1--2.8~dB\xspace}
\newcommand{\AttnQATTrainingFreeGap}{3.4--6.7~dB\xspace}
\newcommand{\VCPSNRGainFourBitWan}{2.9~dB\xspace}
\newcommand{\VCPSNRGainFourBit}{0.5--3.6~dB\xspace}
\newcommand{\LPIPSGainEightBit}{13--29\%\xspace}

\newcommand{\AttnSpeedupCost}{3--4\%\xspace}

\newcommand{\SameByteShare}{79.6\%\xspace}
\newcommand{\MeasuredTV}{1.6\%\xspace}
\newcommand{\MeasuredTVMax}{17.8\%\xspace}
\newcommand{\MeasuredTVNormal}{1.4\%\xspace}
\newcommand{\CastTV}{1.1\%\xspace}
\newcommand{\GroupingOverhead}{3--4\%\xspace}
\newcommand{\ResidualEnergyDrop}{36\%\xspace}

\newcommand{\PrepFusionQuant}{1.59\texttimes\xspace}
\newcommand{\PrepFusionSmoothGather}{1.43\texttimes\xspace}
\newcommand{\PrepFusionRoPE}{1.94\texttimes\xspace}
\newcommand{\PrepKernelOpt}{1.97\texttimes\xspace}
\newcommand{\PrepFusionTotal}{8.74\texttimes\xspace}

\newcommand{\PrepGroupingShare}{29\%\xspace}
\newcommand{\PrepFusedCall}{4.8~ms\xspace}
\newcommand{\PrepUnfusedForward}{1688 ms\xspace}
\newcommand{\PrepFusedForward}{193 ms\xspace}
\newcommand{\BalancedGain}{8.5\%\xspace}
\newcommand{\BalancedShare}{85\%\xspace}
\newcommand{\HadamardVDelta}{0.2\%\xspace}
\newcommand{\CubeGain}{3.7\%\xspace}
\newcommand{\BalancedCost}{2.8\texttimes\xspace}
\newcommand{\DemeanEnergySequence}{8\%\xspace}
\newcommand{\NumVSmoothBlocks}{59.1K\xspace}
\newcommand{\CubeEnergyRemoved}{12\%\xspace}

\newcommand{\AttnSpeedupBThreeHundred}{1.47\texttimes\xspace}
\newcommand{\AttnSpeedupBThreeHundredNaive}{1.31\texttimes\xspace}
\newcommand{\PSNRBThreeHundred}{18.4~dB\xspace}
\newcommand{\PSNRBThreeHundredNaive}{17.1~dB\xspace}

\newcommand{\AssetBox}[2]{%
  \setlength{\fboxrule}{0.4pt}%
  \textcolor{gray}{\fbox{\parbox[c][#2][c]{0.96\textwidth}{%
    \centering\footnotesize\ttfamily #1}}}%
}
\newcommand{\Asset}[3][\textwidth]{%
  \IfFileExists{Figures/#2.pdf}
    {\includegraphics[width=#1]{Figures/#2.pdf}}
    {\AssetBox{Figures/#2.pdf}{#3}}%
}

\newcommand{\TeaserPSNRSage}{19.9~dB\xspace}
\newcommand{\TeaserPSNRVC}{20.2~dB\xspace}
\newcommand{\TeaserSpeedupSage}{0.29\texttimes\xspace}
\newcommand{\TeaserSpeedupVC}{1.60\texttimes\xspace}
\newcommand{\TeaserSpeedupRel}{5.5\texttimes\xspace}

\newcommand{\TeaserPrompt}{Prompt summary: \emph{a young woman chases a
crimson silk scarf down a sunlit coastal lane, in five shots}\dots\ see the
appendix for the full prompt.}

\definecolor{hlred}{RGB}{163,31,52}
\definecolor{nxblue}{HTML}{6552FF}
\definecolor{nxdark}{HTML}{4E3FC7}
\definecolor{nxgrey}{HTML}{666666}
\definecolor{nxpale}{HTML}{E9ECFF}
\definecolor{nxlight}{HTML}{B4AAFF}
\definecolor{nxsoft}{HTML}{A397FF}
\definecolor{nxslate}{HTML}{48546B}
\definecolor{nxsteel}{HTML}{5F79A8}
\definecolor{nxcyan}{HTML}{3AA6C2}
\definecolor{nxmist}{HTML}{C2CEE2}
\definecolor{nxteal}{HTML}{1C5155}
\definecolor{nxgold}{HTML}{D98C0F}
\definecolor{nxgoldpale}{HTML}{F9ECD5}

\lstdefinestyle{nxcode}{%
  language=C,
  basicstyle=\ttfamily\scriptsize,
  commentstyle=\color{nxgrey},
  morekeywords={fp32,uint8,e4m3},
  keywordstyle=\color{nxteal},
  emph={exp2,to_e4m3,round,clip,view_as_e4m3},
  emphstyle=\bfseries,
  columns=fullflexible, keepspaces=true, frame=none,
  aboveskip=3pt, belowskip=0pt, xleftmargin=0pt,
}
\newsavebox{\expcastconv}
\newsavebox{\expcastours}

\definecolor{residualtint}{HTML}{6552FF}
\definecolor{verdictgood}{HTML}{7E9948}
\definecolor{verdictbad}{HTML}{A31F34}
\newcommand{\best}[1]{\textcolor{nxdark}{\textbf{#1}}}
\newcommand{\gain}[1]{\textcolor{nxdark}{\textbf{(#1)}}}
\newcommand{\metric}[2]{\textbf{#1}: #2}

\newlength{\panelheight}
\newlength{\panelruletop}
\newlength{\panelrulebot}
\newlength{\figfourheight}
\newlength{\figfourruletop}
\newlength{\figthreeheight}
\newlength{\figthreeruletop}
\newcommand{\TeaserCols}{5}
\newlength{\TeaserFrameW}
\newcommand{\TeaserRowFA}{%
  \tframe{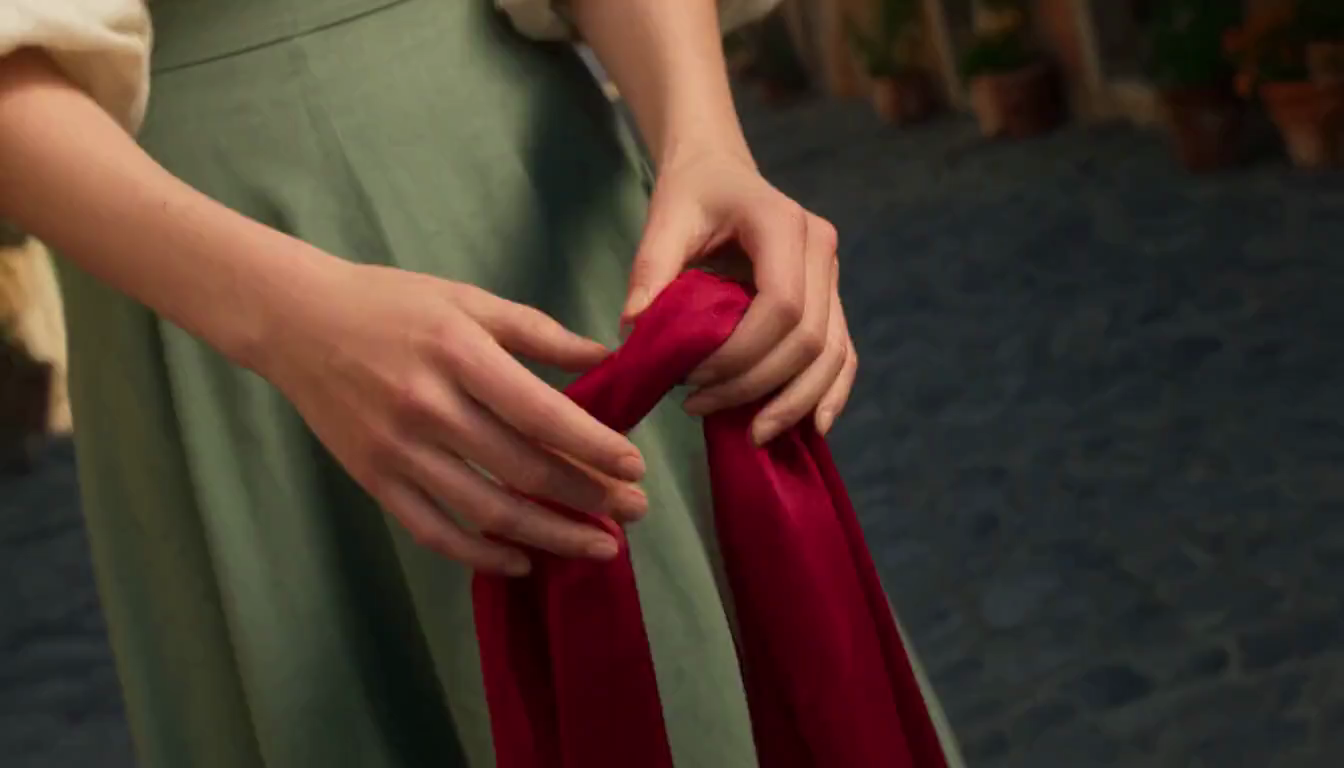} & \tframe{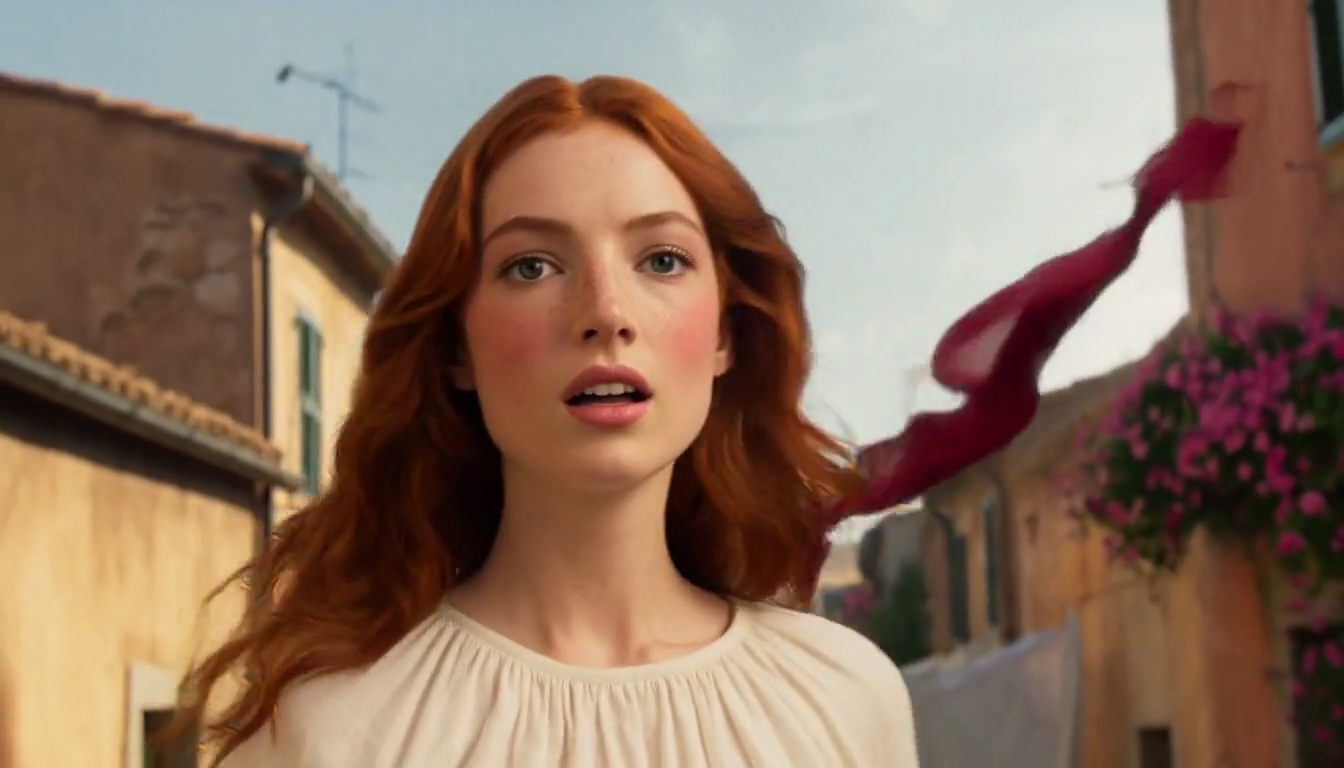} & \tframe{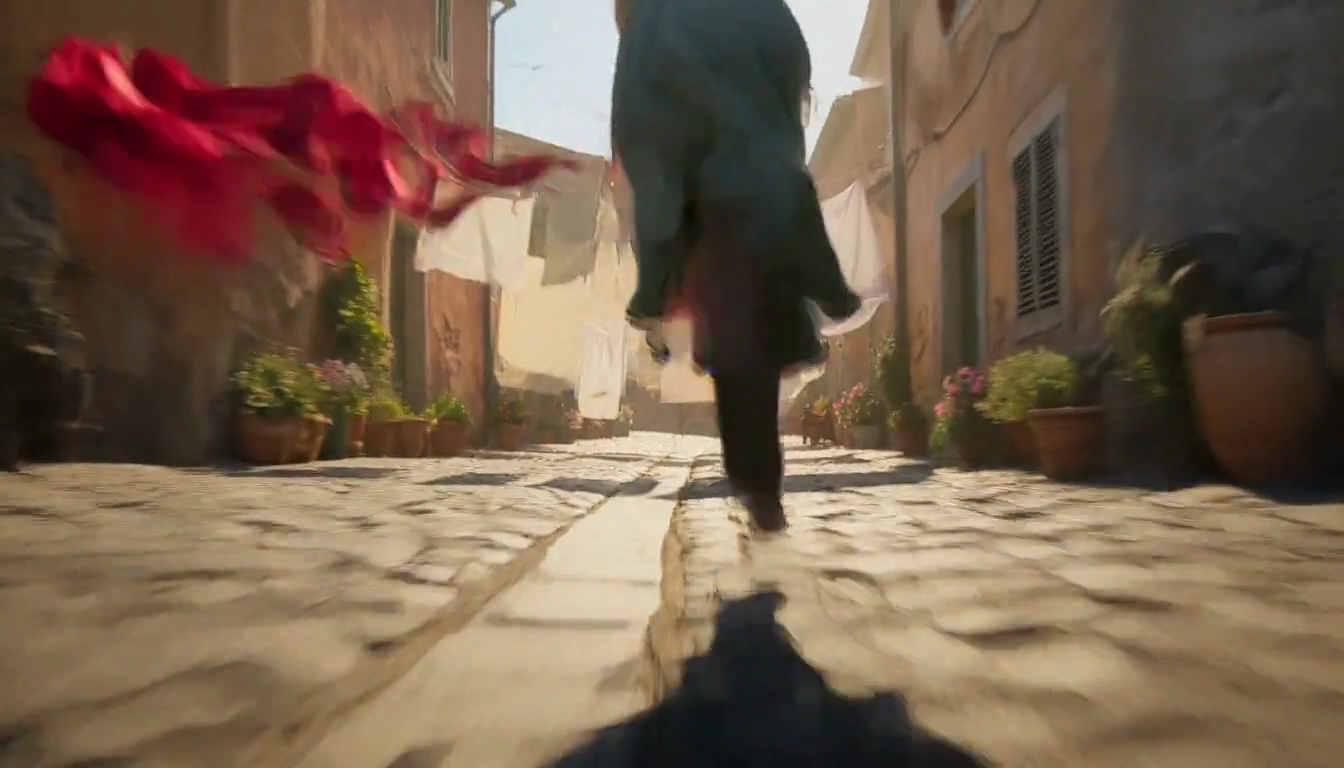} & \tframe{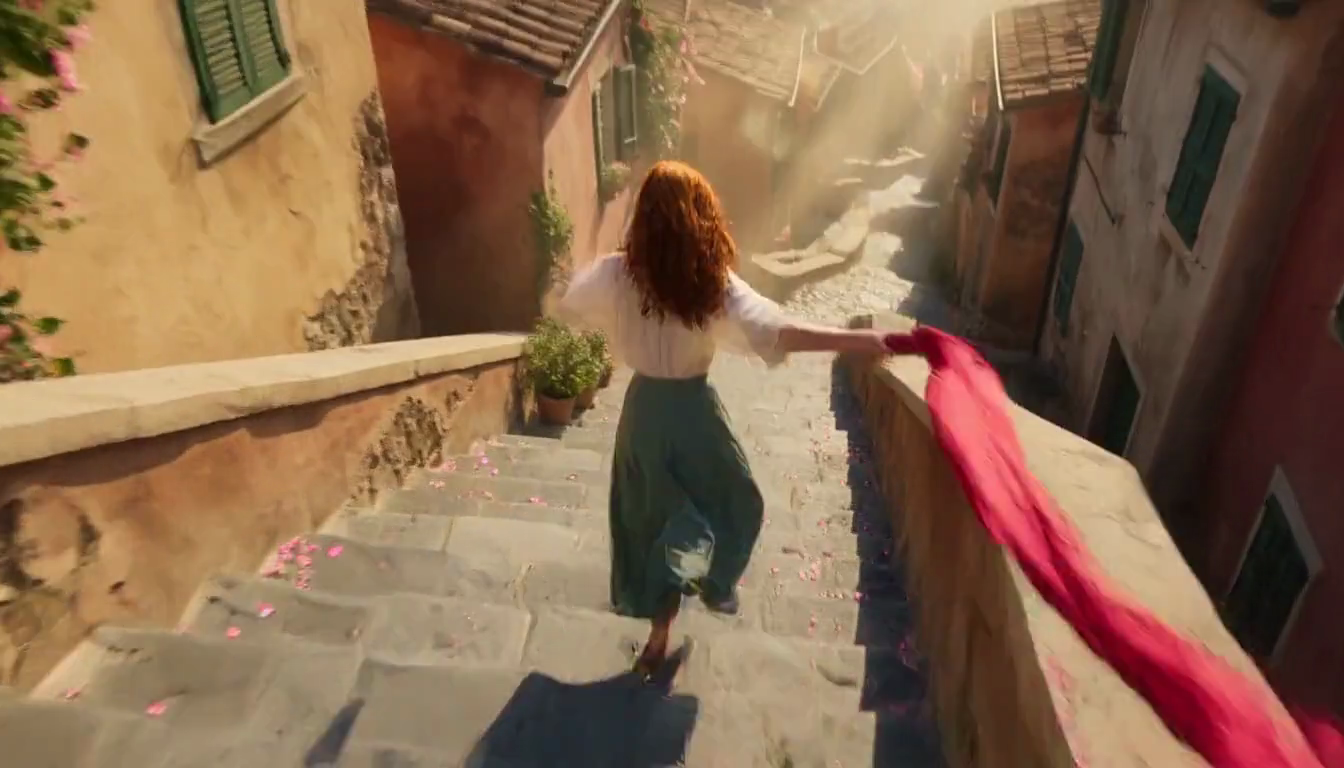} &
  \tframe{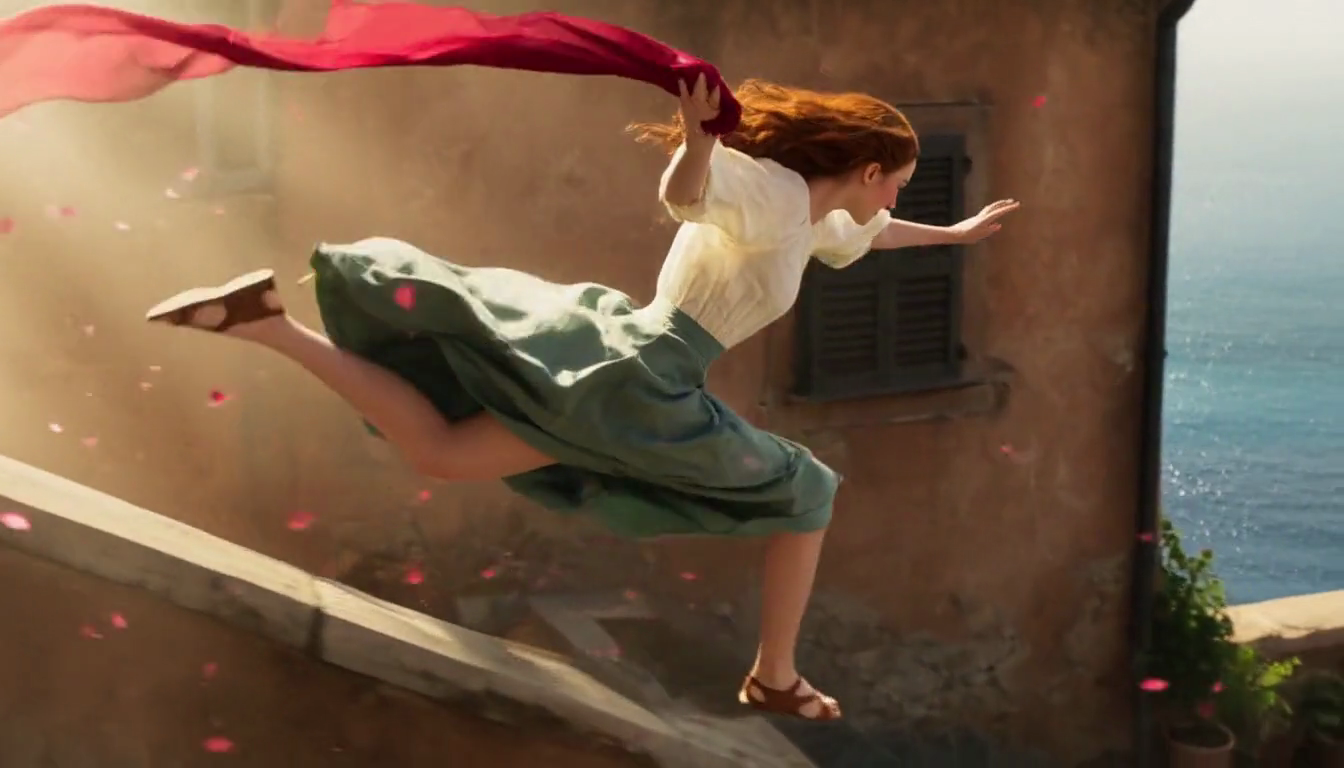}}
\newcommand{\TeaserRowSage}{%
  \tframe{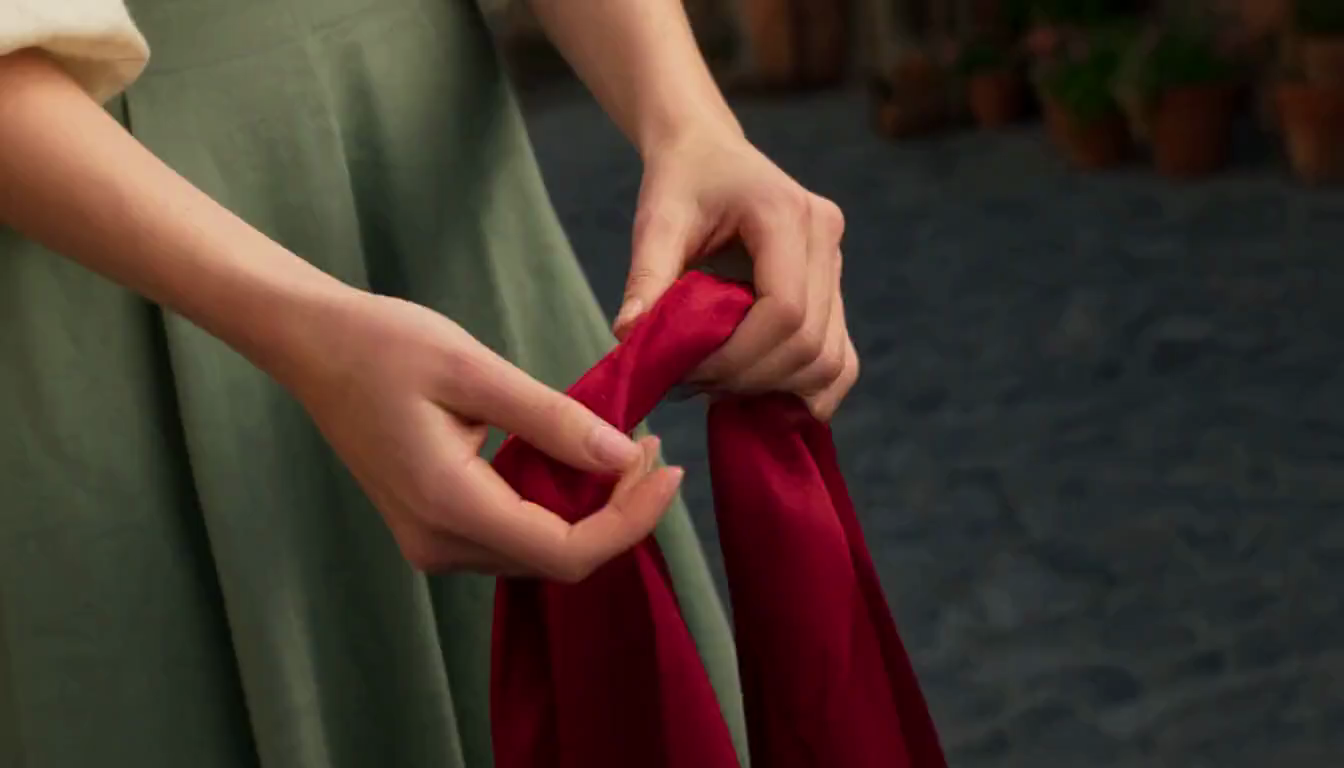} & \tframe{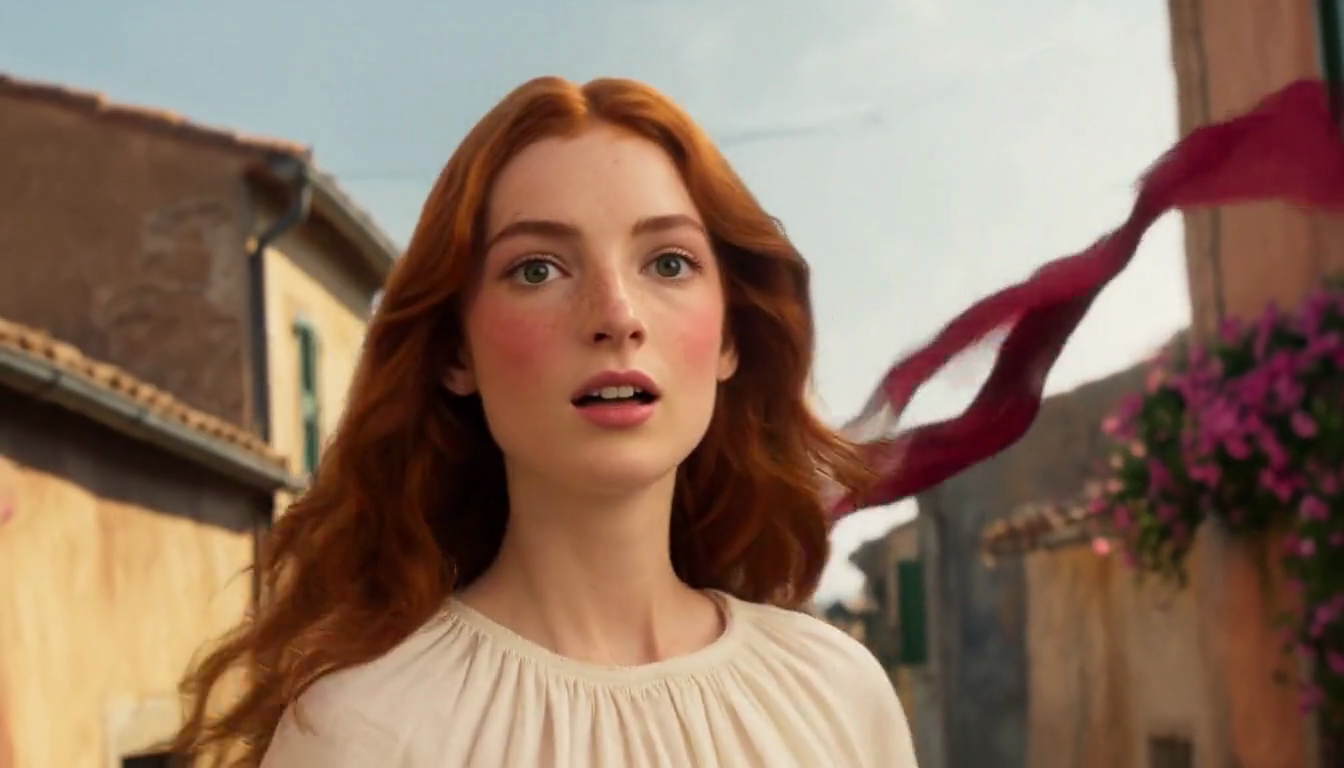} & \tframe{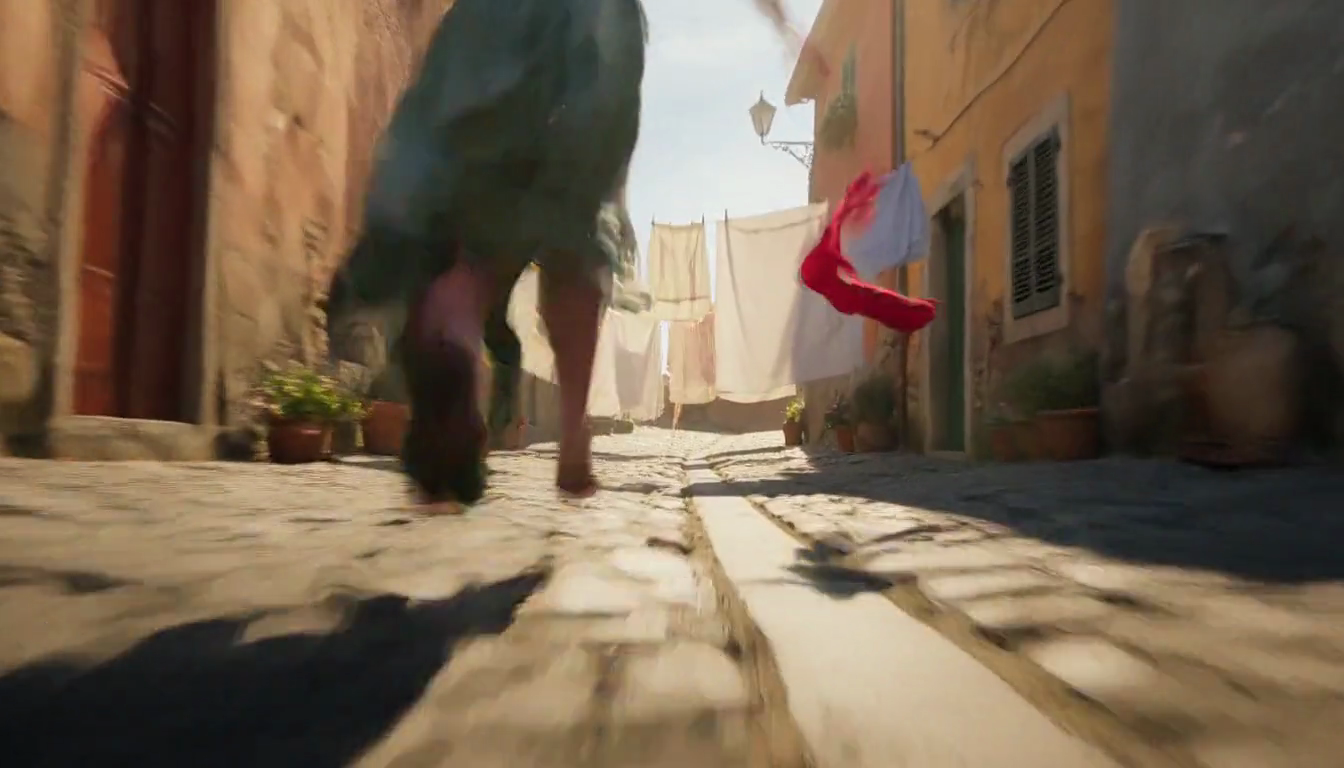} & \tframe{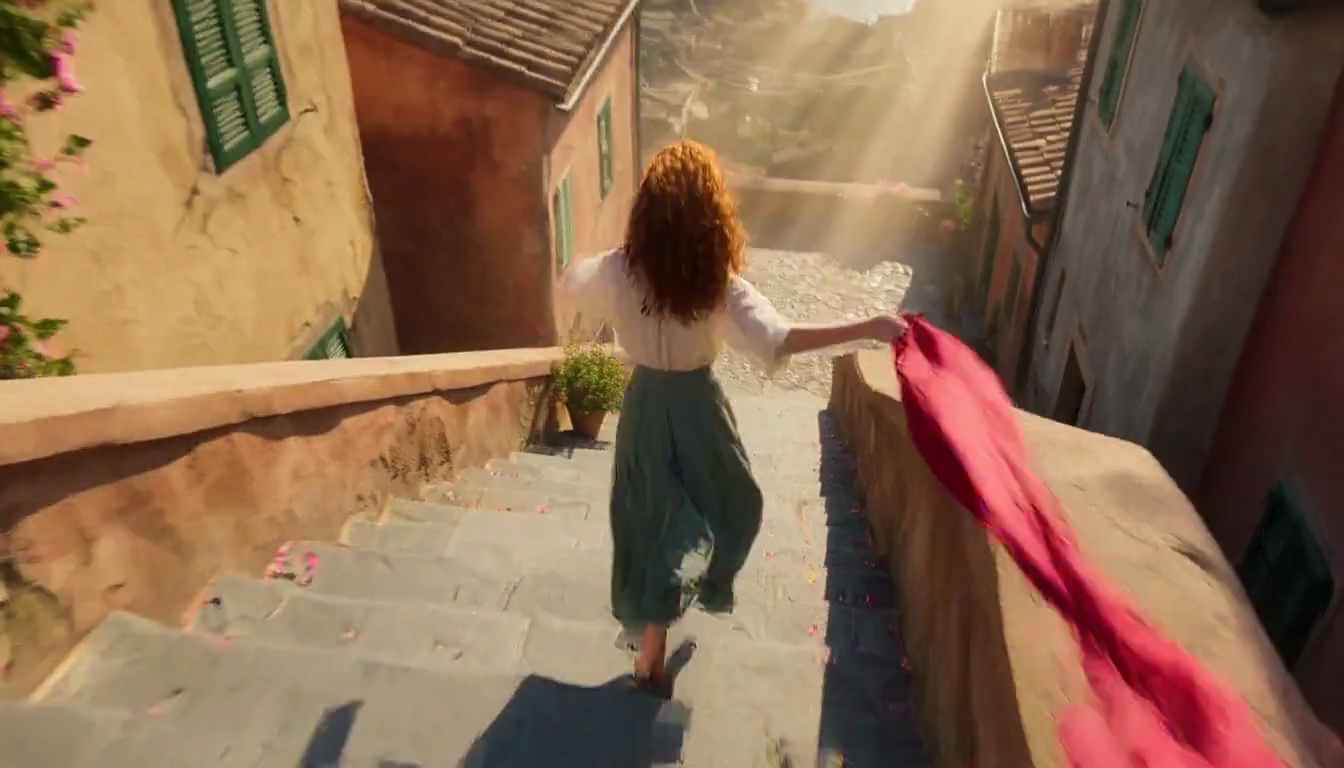} &
  \tframe{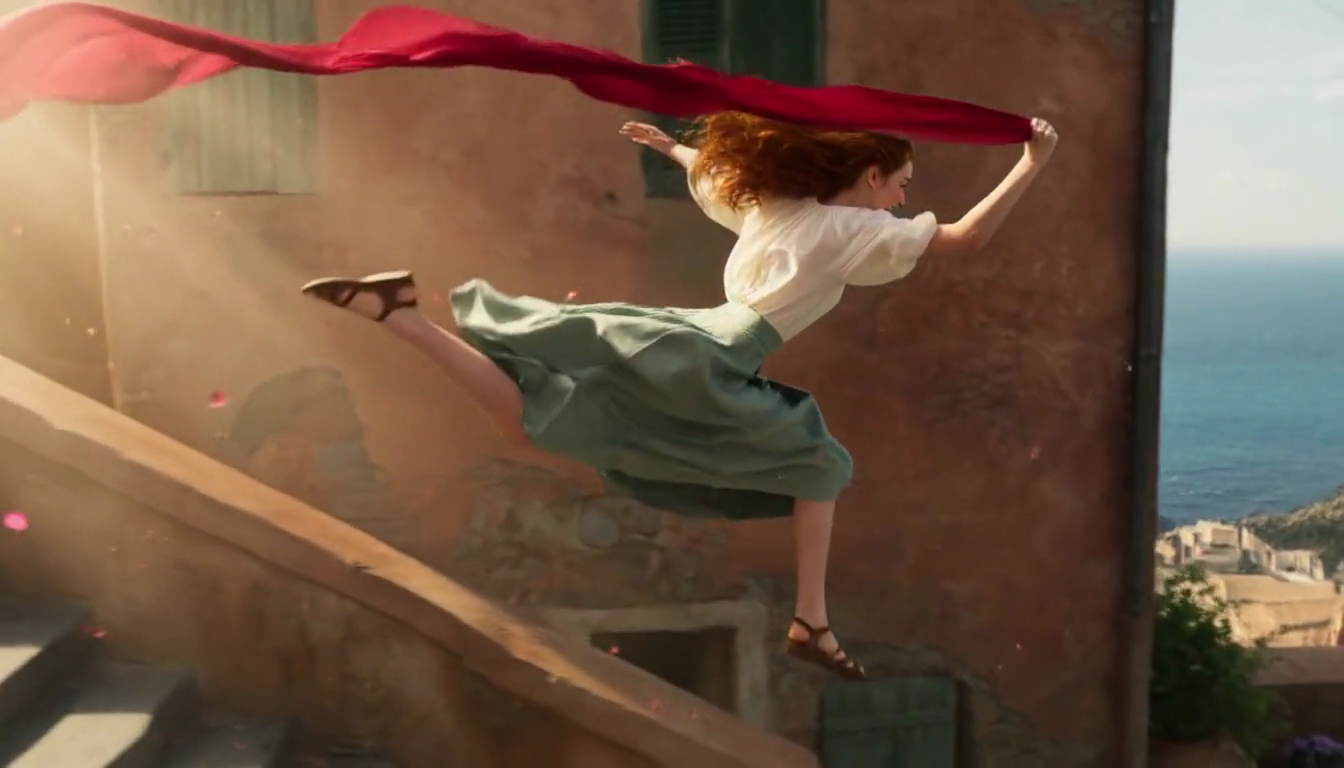}}
\newcommand{\TeaserRowOurs}{%
  \tframe{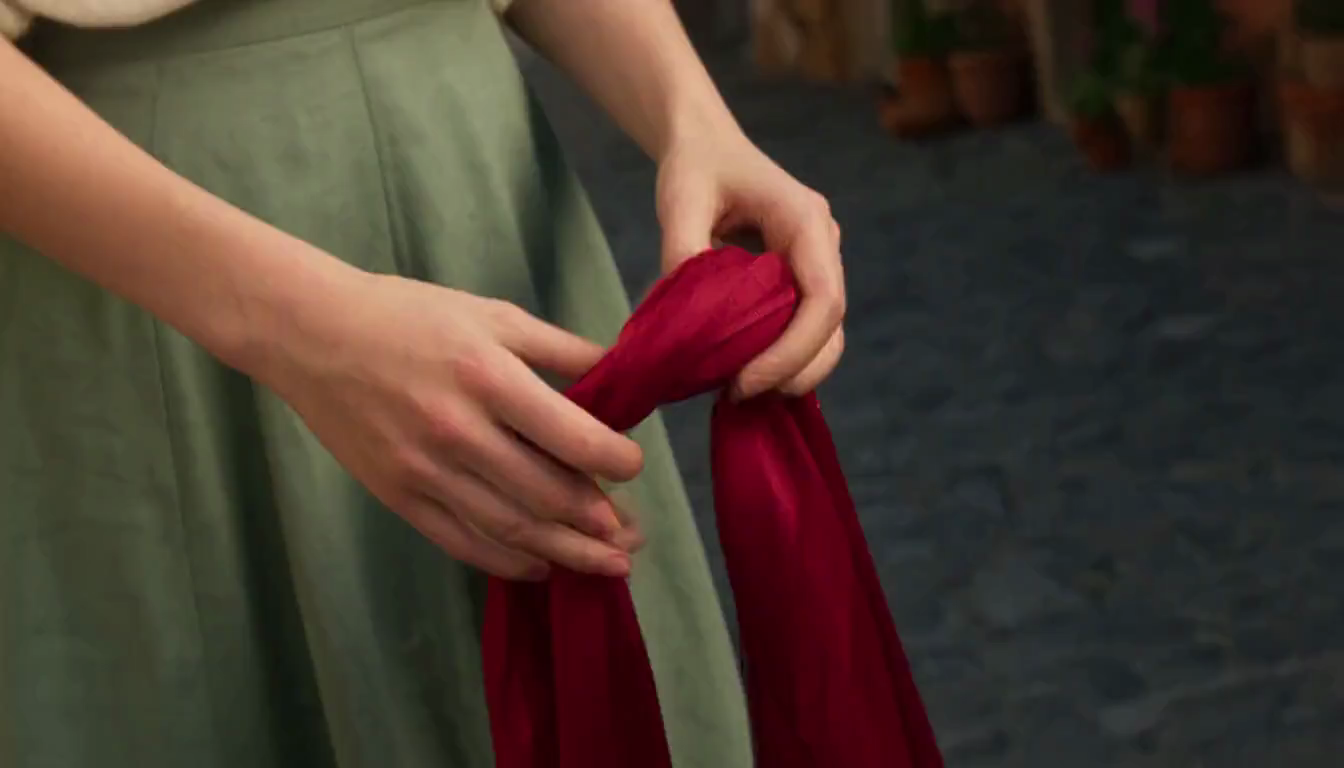} & \tframe{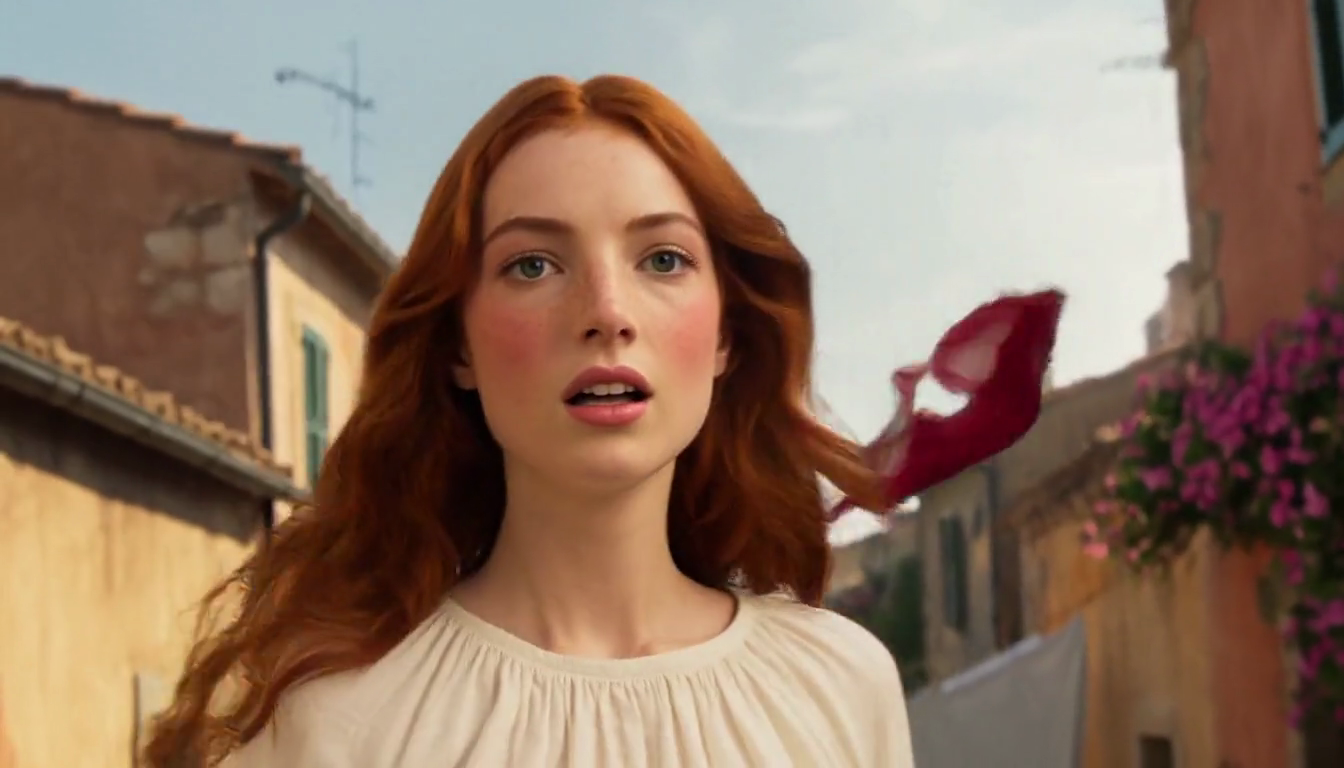} & \tframe{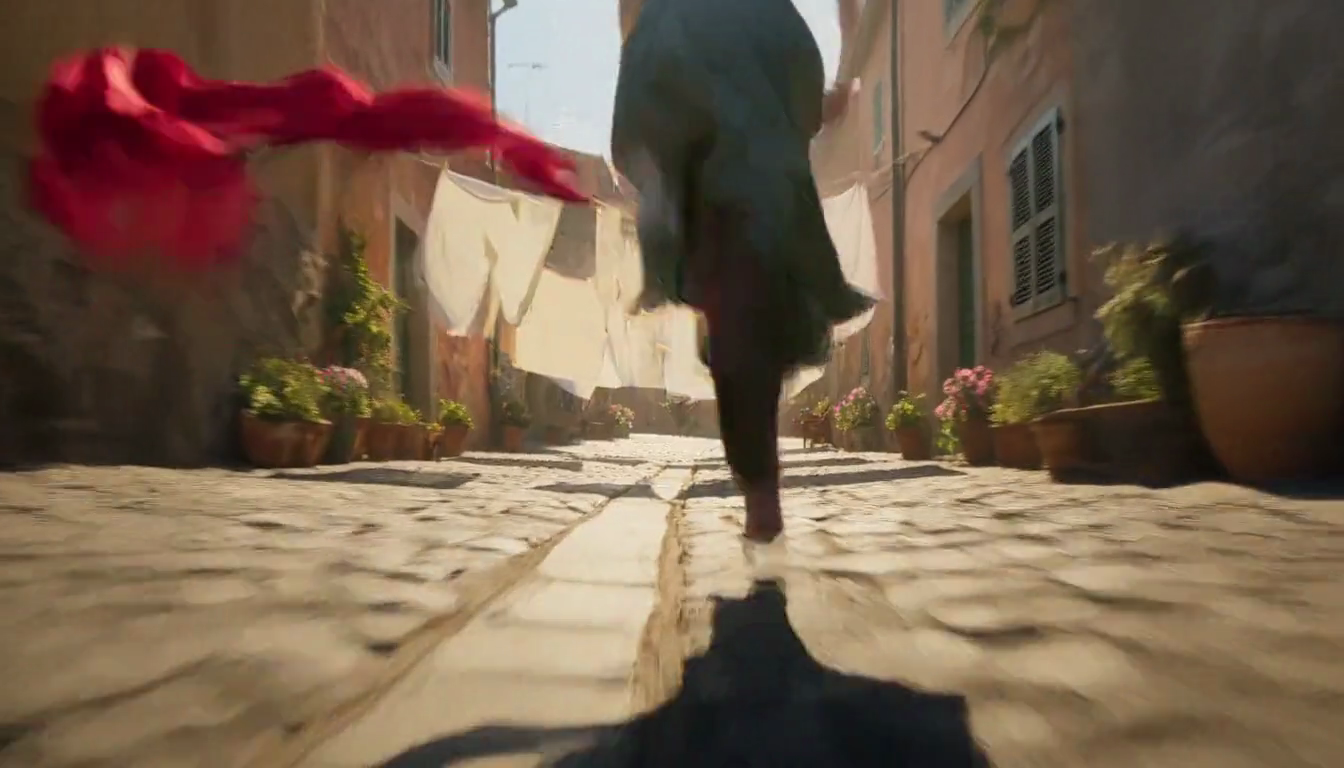} & \tframe{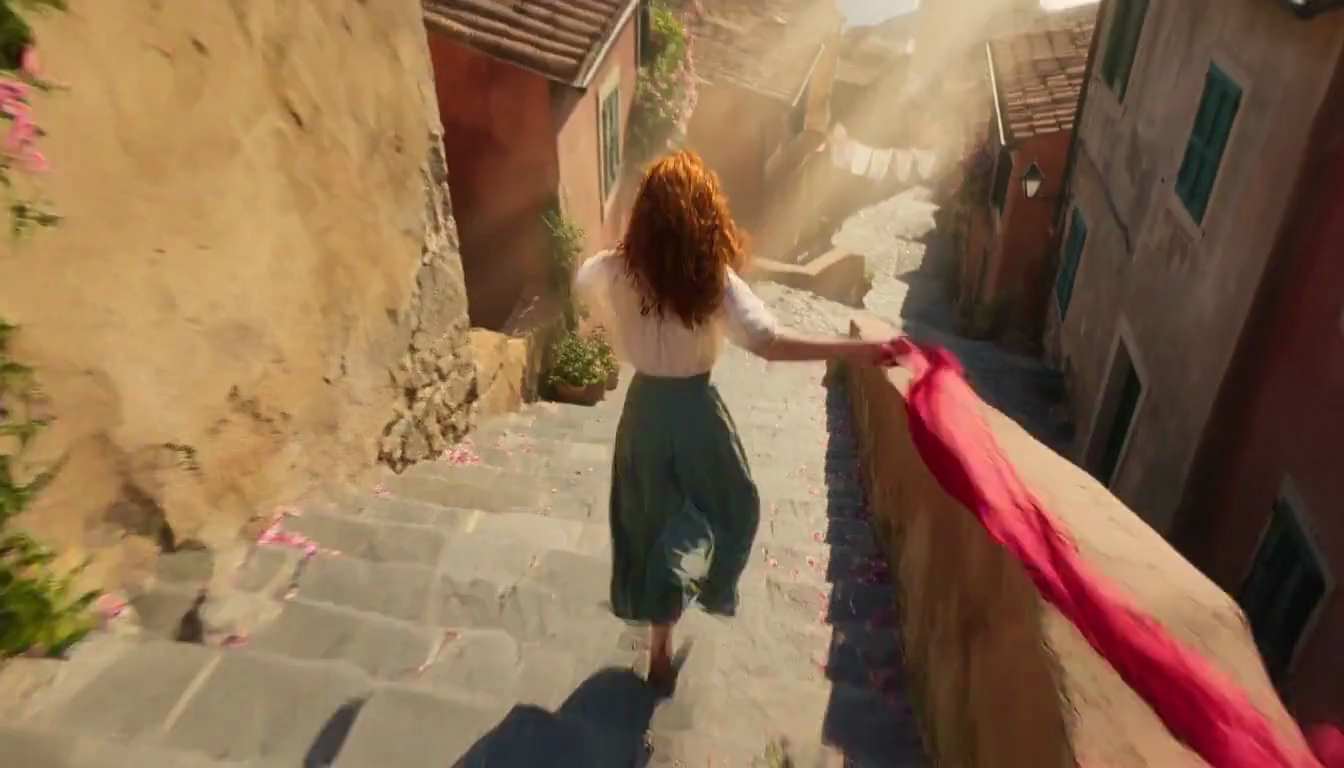} &
  \tframe{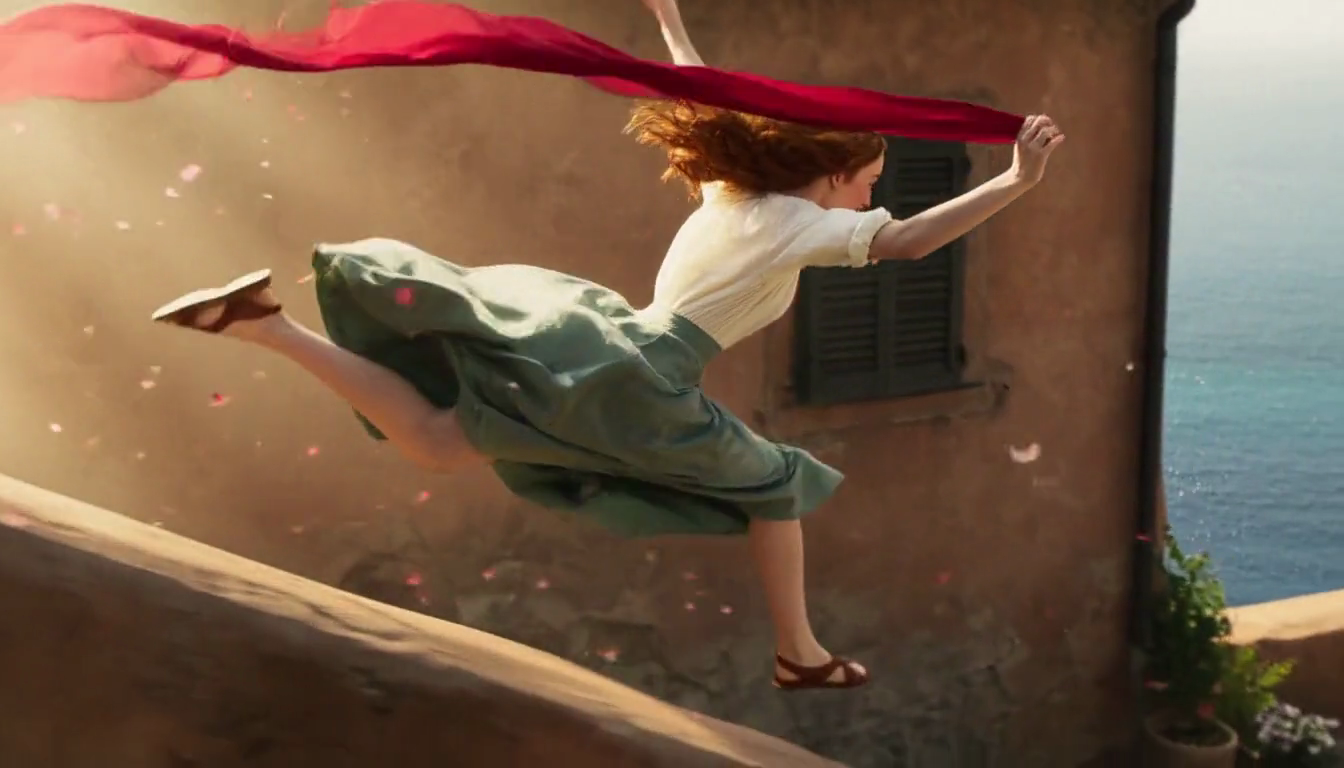}}
\newcommand{\tframe}[1]{%
  \fbox{\includegraphics[width=\TeaserFrameW]{Figures/teaser/#1.png}}}
\newcommand{\metricb}[2]{\textbf{#1}: \textbf{#2}}
\newcommand{\lab}[2]{\makebox[\linewidth][s]{#1\hfill #2}}



\nunchuxlogofalse
\renewcommand{\nunchuxlogo}{Figures/nunchux-logo.pdf}

\zhnotesfalse

\usepackage[breaklinks,colorlinks,citecolor=nunchuxpurple,
            linkcolor=nunchuxpurple,urlcolor=nunchuxpurple]{hyperref}
\usepackage{url}
\usepackage[capitalize,noabbrev]{cleveref}
\crefname{appendix}{Appendix}{Appendices}
\Crefname{appendix}{Appendix}{Appendices}

\renewenvironment{abstract}{\vskip 3pt\centerline{\large\scshape
\color{nunchuxpurple}Abstract}\vspace{0.25ex}%
\setlength{\topsep}{0pt}\setlength{\partopsep}{0pt}\begin{quote}}%
{\par\end{quote}\vskip 1ex}

\usepackage{etoolbox}

\usepackage{placeins}
\makeatletter
\newcommand{\nxneedspace}[1]{\par
  \if@nobreak\else
    \begingroup\@tempdima#1\relax
      \ifdim\pagegoal=\maxdimen\else
        \ifdim\dimexpr\pagegoal-\pagetotal\relax<\@tempdima \newpage \fi
      \fi
    \endgroup
  \fi}
\makeatother
\pretocmd{\subsection}{\nxneedspace{12\baselineskip}}{}{%
  \GenericWarning{}{nxneedspace: could not patch \string\subsection}}

\nunchuxheader{Value Smoothing and Softmax Casting for Low-bit Attention}

\title{\raggedright \method: \underline{V}alue Smoothing and Softmax \underline{C}asting for Low-bit Attention}

\author{%
\textbf{Xingyang Li}$^{*2}$\hspace{0.5em}
\textbf{Dongyun Zou}$^{*1}$\hspace{0.5em}
\textbf{Shining Zhang}$^{*3}$\hspace{0.5em}
\textbf{Jiacheng Chen}$^{1}$\hspace{0.5em}
\textbf{Haocheng Xi}$^{4}$\\[2pt]
\textbf{Lvmin Zhang}$^{5}$\hspace{0.5em}
\textbf{Jun-Yan Zhu}$^{1,3}$\hspace{0.5em}
\textbf{Song Han}$^{2,6}$\hspace{0.5em}
\textbf{Zhekai Zhang}$^{1}$\hspace{0.5em}
\textbf{Yujun Lin}$^{1}$\hspace{0.5em}
\textbf{Muyang Li}$^{1}$\\[6pt]
\normalfont\normalsize
$^1$Nunchux AI \quad $^2$MIT \quad $^3$CMU \quad $^4$UC Berkeley
\quad $^5$Stanford \quad $^6$NVIDIA%
}

\begin{document}

\maketitle

{\renewcommand{\thefootnote}{\fnsymbol{footnote}}%
\footnotetext[1]{Equal contribution.}}

\begin{figure}[H]
  \setlength{\abovecaptionskip}{3pt}
  \centering
  \setlength{\tabcolsep}{0pt}
\setlength{\fboxsep}{0pt}
\setlength{\fboxrule}{0.3pt}
\renewcommand{\arraystretch}{1}
\setlength{\TeaserFrameW}{\dimexpr\linewidth/\TeaserCols-2\fboxrule\relax}
\scriptsize
\begin{tabular}{@{}*{\TeaserCols}{c}@{}}
  \multicolumn{\TeaserCols}{@{}c@{}}{\parbox{\linewidth}{\centering
    \TeaserPrompt}}\\[3pt]
  \multicolumn{\TeaserCols}{@{}c@{}}{\lab{FlashAttention-4 (BF16)}{%
    \metric{Attention speedup}{1.00\texttimes\ (reference)}}}\\[1pt]
  \TeaserRowFA\\[4pt]
  \multicolumn{\TeaserCols}{@{}c@{}}{\lab{SageAttention2 (8-bit)}{%
    \metric{PSNR}{\TeaserPSNRSage}\quad
    \metric{Attention speedup}{\TeaserSpeedupSage}}}\\[1pt]
  \TeaserRowSage\\[4pt]
  \multicolumn{\TeaserCols}{@{}c@{}}{\lab{\best{\method}~\textbf{(8-bit)}}{%
    \metricb{PSNR}{\TeaserPSNRVC}\quad
    \metricb{Attention speedup}{\TeaserSpeedupVC}~%
    \gain{\TeaserSpeedupRel over SageAttention2}}}\\[1pt]
  \TeaserRowOurs
\end{tabular}

  \caption{\textbf{\method is faster and more faithful than
  prior low-bit attention.} On MiniMax-H3 at 1344\texttimes768, SageAttention2
  runs at \TeaserSpeedupSage of BF16 FlashAttention-4: it ships no Blackwell
  kernel, so on B200 it runs a kernel written for earlier GPUs.
  It also distorts the motion---in the middle frame its scarf has crossed to
  the right of the lane, while \method keeps it on the left.
  \method reaches \TeaserSpeedupVC, \TeaserSpeedupRel faster than
  SageAttention2, at higher PSNR over \NumPrompts prompts
  (protocol in \cref{sec:exp-setup}).}
  \label{fig:teaser}
\end{figure}

\begin{abstract}
Diffusion Transformers deliver state-of-the-art video generation, but their long spatiotemporal sequences make attention the dominant deployment cost, and a deployable low-bit kernel must be accurate and fast.
Accuracy is limited by outliers: a block's quantization scale is set by its largest entries, leaving typical entries confined to a narrow range of representable values.
Prior work smooths queries and keys, but value outliers follow no fixed channel or spatiotemporal structure and remain the dominant source of output error.
Speed is limited by softmax: low-bit Tensor Cores accelerate only the two matrix multiplications, so the high-precision exponential between them becomes the longest pipeline stage on datacenter GPUs.
We propose \textbf{\method}, a training-free low-bit attention framework that addresses both by pairing \textbf{V}alue smoothing with a fused probability \textbf{C}ast.
\textbf{V-Smooth} reorders value tokens by lightweight online clustering, so the tokens in a hardware block quantize well together.
It quantizes only the residual after subtracting the block mean, and restores that mean from the row sum the online softmax already maintains.
\textbf{ExpCast-FP8} maps log-domain scores directly to E4M3 probability codes with one fused multiply-add, eliminating the FP32 exponential and the format conversion.
We implement \method for B200, B300, H200, RTX PRO 6000, and RTX 5090.
Across Wan2.2, LongCat-Video, HunyuanVideo-1.5, and MiniMax-H3, \method improves fidelity over low-bit baselines, speeds up the attention kernel over BF16 FlashAttention-4 by $1.46$--$1.59\times$ on datacenter Blackwell and Hopper and by \AttnSpeedupWorkstation on workstation cards, and generates a clip \EndToEndSpeedupDatacenterRange and \EndToEndSpeedupWorkstation faster end to end.
\end{abstract}

\section{Introduction}
\label{sec:introduction}

Video diffusion models~\citep{ho2022video,blattmann2023align} now generate high-resolution clips with coherent motion, and open-weight models such as MiniMax-H3, Wan2.2, LongCat-Video, and HunyuanVideo-1.5~\citep{minimax2026h3,wan2025wan,team2025longcat,wu2025hunyuanvideo} approach the quality of commercial systems~\citep{brooks2024sora,polyak2024moviegen,gao2025seedance}.
These models are Diffusion Transformers (DiTs)~\citep{peebles2023scalable} that flatten the latent video into one sequence of spatiotemporal tokens and apply full self-attention at every layer.

At high resolution and long duration, attention dominates inference time.
A 5-second 720p clip of Wan2.2-14B~\citep{wan2025wan} spans about 70K tokens, and attention accounts for more than 64\% of the generation time on the RTX 5090.
The cost comes from the score product $\QK$ and the value product $\PV$, whose arithmetic grows quadratically with the token count.
Low-bit Tensor Cores act directly on it: FP8 doubles the dense BF16 peak on Hopper and datacenter Blackwell, and FP4 reaches eight times it on the RTX 5090~\citep{nvidia2022hopper,nvidia2024blackwell,nvidia2025rtxblackwell}.

Two obstacles stand between this peak throughput and faster video generation.
The first is accuracy.
A low-bit quantizer gives each block of an operand one shared scale, so the few entries far larger than the rest fix that scale for the whole block and leave every other entry with few effective bits~\citep{xiao2023smoothquant,lin2024awq,zhang2025sageattention}.
These entries are the operand's outliers, and attention carries them on both sides of the softmax.
The second is efficiency: peak matrix throughput becomes kernel speedup only when the quantization, the scales, and the online softmax between the two products keep the Tensor Cores busy~\citep{shah2024flashattention,zadouri2026flashattention4}.
Both appear at once on video DiTs: on MiniMax-H3 at 1344\texttimes768, SageAttention2 drifts off the reference shot and still runs the attention kernel at \TeaserSpeedupSage of BF16 FlashAttention-4 on B200 (\cref{fig:teaser}).

Training-free low-bit attention has concentrated on the score product.
SageAttention and its successors~\citep{zhang2025sageattention,zhang2024sageattention2,zhang2025sageattention3} smooth and scale queries and keys down to INT4 and NVFP4, and FlashAttention-3~\citep{shah2024flashattention} applies a randomized Hadamard rotation before its FP8 path.
However, once the probability error is small, the value error dominates the output error on video DiTs (\cref{fig:motivation}\captiona).
Value outliers follow no fixed channel or spatiotemporal pattern, so neither a rotation nor a static layout removes them (\cref{sec:prelim-value}).
Attn-QAT~\citep{zhang2026attnqat} closes this gap by retraining, at the cost of model-specific data and GPU time.
On the efficiency side, B200 doubles the Tensor Core throughput of Hopper but not its exponential throughput.
Once both products run in FP8, the FP32 exponential and the FP32-to-FP8 cast of every probability become the longest pipeline stage~\citep{zadouri2026flashattention4,zhang2026attnqat}.
Attn-QAT accordingly reports at most 1.3 times speedup over BF16 FlashAttention-4 on B200, and a slowdown once the value product is quantized on the fly.
An accurate low-bit attention that converts peak throughput into real speedup on video DiTs therefore remains an open challenge.

We propose \method, a training-free low-bit attention kernel that addresses both by pairing \textbf{V}alue smoothing with a fused probability \textbf{C}ast.
\textbf{V-Smooth} restores accuracy.
It groups value tokens by a lightweight online $k$-means and permutes keys and values together, leaving the output unchanged, then subtracts each hardware block's mean and quantizes only the residual.
The mean is restored during the online recurrence from the row sum that online softmax already maintains, so no extra pass or buffer is needed.
Grouping runs only on the first quarter of the denoising steps (\cref{tab:ablation}).
\textbf{ExpCast-FP8} restores speed.
It maps each log-domain score to its E4M3 code with one fused multiply-add, bypassing both the FP32 exponential and the FP32-to-FP8 cast.
Its row-level error is bounded by 3.64\% plus the underflow tail (\cref{prop:direct-fp8-row}).

We implement \method as a fused CuTe/CUDA kernel and evaluate it on four video DiTs: Wan2.2, LongCat-Video, HunyuanVideo-1.5, and MiniMax-H3.
At 8 bits, \method accelerates attention over BF16 FlashAttention-4~\citep{zadouri2026flashattention4} by \AttnSpeedupBTwoHundred on B200 and \AttnSpeedupHTwoHundred on H200, which is \SageSpeedupBTwoHundred and \SageSpeedupHTwoHundred over SageAttention2~\citep{zhang2024sageattention2}.
At 4 bits on workstation Blackwell, V-Smooth accelerates attention by \AttnSpeedupRTXPro on the RTX PRO 6000 and \AttnSpeedupRTXFiftyNinety on the RTX 5090.
End to end, \method generates one Wan2.2 clip \EndToEndSpeedupBTwoHundred faster on B200, \EndToEndSpeedupHTwoHundred on H200, \EndToEndSpeedupRTXPro on the RTX PRO 6000, and \EndToEndSpeedupRTXFiftyNinety on the RTX 5090.
At matched precision, V-Smooth is more faithful than every training-free baseline on all four models. Fusing ExpCast-FP8 trades part of that margin for speed: the fused kernel still leads every baseline on Wan2.2, LongCat-Video, and HunyuanVideo-1.5, and on MiniMax-H3 it falls within the \MiniMaxBaselineSpread band spanned by the three strongest training-free baselines.
It computes attention \TeaserSpeedupVC\ faster than BF16 FlashAttention-4 and \TeaserSpeedupRel\ faster than SageAttention2 (\cref{tab:main,fig:teaser}).

\section{Related Work}
\label{sec:related-work}

\begin{figure}[t]
  \centering
  \colorlet{ovgrey}{nxgrey!18}
\colorlet{ovslate}{nxgrey}
\colorlet{ovcard}{nxblue}
\providecommand{\tn}[1]{\textbf{#1}}
\newcommand{\ovswatch}[1]{\tikz[baseline=-0.35ex]{%
  \filldraw[draw=nxdark, line width=0.3pt, fill=#1]
    (0,0) rectangle (1.5mm,1.5mm);}}
\begin{tikzpicture}[
  font=\fontsize{6}{7}\selectfont,
  grid/.style={draw=white, line width=0.7pt},
  gborder/.style={draw=ovslate, line width=0.8pt},
  cborder/.style={draw=ovcard, line width=0.8pt},
  stripes/.style={inner sep=0pt, minimum width=0.52cm, minimum height=0.80cm},
  tile/.style={inner sep=0pt, minimum width=0.60cm, minimum height=0.60cm},
  nlab/.style={inner sep=0pt, font=\fontsize{8}{9}\selectfont},
  glab/.style={anchor=north, align=center, inner sep=0pt, yshift=-2.4pt,
               font=\fontsize{6}{7}\selectfont},
  op/.style={inner sep=0pt, font=\large},
  flow/.style={-{Latex[length=1.4mm,width=1.1mm]}, line width=0.5pt, draw=black!80,
               shorten >=0.5pt, shorten <=0.5pt},
  opbox/.style={draw=none, fill=nxsoft, text=black, rounded corners=1.5pt,
                align=center, inner xsep=2.0pt, inner ysep=0pt,
                minimum height=0.80cm},
  optitle/.style={inner sep=0pt, font=\fontsize{7}{8}\bfseries\selectfont,
                  text=ovcard},
  ours/.style={rounded corners=2pt, draw=ovcard, dashed, line width=0.8pt,
               inner sep=0pt},
  ourstitle/.style={anchor=south, font=\fontsize{6}{7}\bfseries\selectfont,
                    text=ovcard, inner sep=1.5pt},
  verdict/.style={anchor=north, inner sep=0pt, align=left,
                  font=\fontsize{6}{7}\selectfont},
  panel/.style={anchor=north, inner sep=0pt, font=\scriptsize},
  node distance=0.16cm,
]
\def\rh{0.13333}\def\ny{0.64}\def\gy{-0.50}
\newcommand{\stripes}[2]{%
  \foreach \f [count=\k from 0] in {#2}{%
    \fill[fill=\f] ($(#1.north west)+(0,-\k*\rh)$) rectangle
      ($(#1.north east)+(0,-\k*\rh-\rh)$);}
  \foreach \k in {1,...,5}{%
    \draw[grid] ($(#1.north west)+(0,-\k*\rh)$) -- ($(#1.north east)+(0,-\k*\rh)$);}
  \draw[cborder] (#1.north west) rectangle ($(#1.north east)+(0,-6*\rh)$);}
\newcommand{\meanrows}[3]{%
  \filldraw[cborder, fill=#2] ($(#1.north west)+(0,-0.8*\rh)$) rectangle
    ($(#1.north east)+(0,-2.2*\rh)$);
  \filldraw[cborder, fill=#3] ($(#1.north west)+(0,-3.8*\rh)$) rectangle
    ($(#1.north east)+(0,-5.2*\rh)$);}
\newcommand{\gridtile}[1]{%
  \fill[fill=ovgrey] (#1.north west) rectangle ($(#1.north east)+(0,-0.60cm)$);
  \foreach \k in {1,2,3}{%
    \draw[grid] ($(#1.north west)+(0,-\k*0.15cm)$) -- ($(#1.north east)+(0,-\k*0.15cm)$);}
  \foreach \k in {1,...,5}{%
    \draw[grid] ($(#1.north west)+(\k*0.10cm,0)$) -- ($(#1.north west)+(\k*0.10cm,-0.60cm)$);}
  \draw[gborder] (#1.north west) rectangle ($(#1.north east)+(0,-0.60cm)$);}
\newcommand{\cmark}{\tikz[baseline=-0.55ex]{\fill[verdictgood] circle(0.85mm);
  \draw[white,line width=0.35pt] (-0.42mm,0.02mm) -- (-0.12mm,-0.32mm) -- (0.45mm,0.36mm);}}
\newcommand{\xmark}{\tikz[baseline=-0.55ex]{\fill[verdictbad] circle(0.85mm);
  \draw[white,line width=0.35pt] (-0.32mm,-0.32mm) -- (0.32mm,0.32mm)
    (-0.32mm,0.32mm) -- (0.32mm,-0.32mm);}}

\node[tile] (as) at (0,0) {};
\gridtile{as}
\node[nlab] at (as |- 0,\ny) {\textit{S} = \tn{QK}$^{\!\top}$};
\node[glab] at (as |- 0,\gy) {FP32\\INT8 matmul};
\node[opbox, right=0.26cm of as] (aexp) {exp(S).to(FP8)};
\node[optitle] at (aexp |- 0,\ny) {Exp + cast};
\draw[flow] (as.east) -- (aexp.west);
\node[tile, right=0.26cm of aexp] (ap) {};
\gridtile{ap}
\node[nlab] at (ap |- 0,\ny) {\tn{P}};
\node[glab] at (ap |- 0,\gy) {FP8 (E4M3)\\probability};
\draw[flow] (aexp.east) -- (ap.west);
\node[op, right=of ap] (ax) {$\times$};
\node[stripes, right=of ax] (av) {};
\stripes{av}{nxsoft,ovcard!22,ovcard!26,ovcard!24,nxsoft!88,ovcard!23}
\node[nlab] at (av |- 0,\ny) {\tn{V}};
\node[glab] at (av |- 0,\gy) {FP8 (E4M3)\\sequence order};
\coordinate (aleft) at (as.west);
\coordinate (aright) at (av.east);
\node[verdict] at ($(aleft)!0.5!(aright)+(0,-1.30)$)
  {\xmark\ FP32 exp and cast on the critical path\\[1pt]
   \xmark\ one \ovswatch{nxsoft}\ outlier token sets each block's scale};
\node[panel] at ($(aleft)!0.5!(aright)+(0,-1.94)$) {(a) Prior low-bit attention (e.g.\ SageAttention)};

\draw[line width=1pt, dotted] ($(aright)+(0.54,1.18)$) -- ++(0,-3.26);

\node[tile, right=0.92cm of av] (bs) {};
\gridtile{bs}
\node[nlab] at (bs |- 0,\ny) {\textit{S} = \tn{QK}$'^{\top}$};
\node[glab] at (bs |- 0,\gy) {FP32\\INT8 matmul};
\node[opbox, right=0.26cm of bs] (ecbox) {Round(a\,*\,S + b).view(FP8)};
\node[optitle] at (ecbox |- 0,\ny) {ExpCast-FP8};
\draw[flow] (bs.east) -- (ecbox.west);
\node[tile, right=0.26cm of ecbox] (bp) {};
\gridtile{bp}
\node[nlab] at (bp |- 0,\ny) {\tn{P}};
\node[glab] at (bp |- 0,\gy) {FP8 (E4M3)\\probability};
\draw[flow] (ecbox.east) -- (bp.west);
\node[op, right=of bp] (bx) {$\times$};
\node[stripes, right=0.22cm of bx] (bv) {};
\stripes{bv}{nxsoft,nxsoft!88,ovcard!26,ovcard!24,ovcard!23,ovcard!22}
\node[nlab] (nbv) at (bv |- 0,\ny) {\tn{V}[$\pi$]};
\node[op, right=0.18cm of bv] (beq) {$=$};
\node[stripes, right=0.18cm of beq] (bm) {};
\meanrows{bm}{nxsoft!94}{ovcard!24}
\node[nlab] (nbm) at (bm |- 0,\ny) {$\boldsymbol\mu_j$};
\node[op, right=0.18cm of bm] (bpl) {$+$};
\node[stripes, right=0.18cm of bpl] (br) {};
\stripes{br}{ovcard!12,ovcard!10,ovcard!14,ovcard!9,ovcard!8,ovcard!7}
\node[nlab] (nbr) at (br |- 0,\ny) {\tn{R}};
\node[ours, fit={($(nbv.north west)+(-0.13,0.08)$) ($(br.south east)+(0.13,-0.10)$)}]
  (vsbox) {};
\node[ourstitle] at (vsbox.north) {V-Smooth};
\node[glab] at (bv |- 0,\gy) {BF16\\$k$-means sorted};
\node[glab] at (br |- 0,\gy) {FP8 (E4M3)\\residual};
\node[glab] at (bm |- 0,\gy) {BF16\\block mean};
\node[verdict] at ($(bs.west |- 0,0)!0.5!(vsbox.east |- 0,0)+(0,-1.30)$)
  {\cmark\ the byte comes from the score: no exp, no cast\\[1pt]
   \cmark\ the block mean takes the \ovswatch{nxsoft}\ outliers, leaving
   \ovswatch{ovcard!12}\ to quantize};
\node[panel] at ($(bs.west |- 0,0)!0.5!(vsbox.east |- 0,0)+(0,-1.94)$) {(b) \method\ (ours)};
\end{tikzpicture}
  \caption{\textbf{\method\ changes two steps of low-bit attention and leaves the rest unchanged.}
Both panels turn the FP32 scores $S=\mathbf{Q}\mathbf{K}^{\!\top}$, row maximum subtracted, into an E4M3 probability tile and multiply it by a block of values.
Dark rows are outlier tokens; pale rows are their blockmates.
\textbf{Left (standard).} An FP32 exponential precedes the cast, and values are quantized in token order, so one outlier sets its whole block's scale.
\textbf{Right (ours).} ExpCast-FP8 writes the FP8 byte directly from $S$: the multiply-add $a\,S+b$ ($a=8\log_2 e$, $b=120+\beta$) folds the E4M3 exponent scale, its bias, and the row-maximum scaling of \cref{eq:direct-fp8} into one FMA; \texttt{Round} is round-to-nearest-even and \texttt{view} reinterprets that byte as E4M3 rather than converting it ($\beta=\BetaUsed$ centres the $\log_2$ error, \cref{sec:method-softmax}).
V-Smooth sorts tokens by $k$-means label so the outliers share a block whose 16-bit mean carries their magnitude; only the pale remainder is quantized.}
  \label{fig:method}
\end{figure}

\paragraph{Efficient video generation.} The cost of video DiTs has been
attacked from several directions. Few-step distillation shortens the sampling
trajectory~\citep{wang2023videolcm,li2024t2vturbo,yin2025causvid,ding2025dollar},
feature caching reuses activations across adjacent
steps~\citep{ma2024deepcache,liu2025teacache}, and distributed engines
partition the sequence or the transformer blocks across
GPUs~\citep{li2024distrifusion,fang2024pipefusion}.
Compact autoencoders shorten the latent sequence
itself~\citep{chen2025deepcompression,chen2025dcvideogen,hacohen2024ltxvideo}.
Closest to attention, sparse attention exploits the spatiotemporal locality
of video to skip most token interactions, with
static~\citep{li2025radial,zhang2025fast},
predicted~\citep{xi2025sparse,yang2025sparse,zhang2025spargeattn}, or
trained~\citep{zhang2025vsa} patterns, and linear attention replaces the
softmax kernel~\citep{chen2025sana,wang2025lingen}.
Sparse VideoGen~2~\citep{yang2025sparse} clusters and permutes tokens as V-Smooth does, but to make an attention block skippable rather than to give a quantization block a mean worth subtracting.
\method reduces the cost
of each interaction and leaves the attention pattern and the sampling schedule
unchanged, so it composes with all of these directions.

\paragraph{Low-bit quantization.} Post-training quantization of diffusion
models has concentrated on the linear layers, from timestep-aware calibration
of U-Nets~\citep{shang2023post,li2023q,wang2024accurate} to DiTs and
video DiTs~\citep{wu2024ptq4dit,zhao2024vidit,tian2024qvd}. Outliers
are the central obstacle: SmoothQuant and
AWQ~\citep{xiao2023smoothquant,lin2024awq} rescale channels between
activations and weights, QuaRot~\citep{ashkboos2024quarot} rotates
them across channels, and SVDQuant~\citep{lisvdquant} absorbs them into
a low-rank branch. DeltaQuant quantizes each token as a cube mean plus a
low-bit delta, and Quant VideoGen quantizes the KV cache of autoregressive
video models by grouping and subtracting the mean iteratively, both exploiting
the spatiotemporal similarity of
activations~\citep{li2026deltaquant,xi2026quantvideogen}.
None of these reaches the operands of the attention product:
SmoothQuant, QuaRot, and SVDQuant need a static weight to take the outliers,
DeltaQuant fixes its partition to the spatiotemporal grid, and Quant VideoGen
reconstructs its cache before attention runs. Quantized attention instead
feeds both operands of $\QK$ and $\PV$ to the Tensor Core in low precision
inside the tiled FlashAttention kernel~\citep{dao2022flashattention,
dao2023flashattention2,shah2024flashattention}.
INT-FlashAttention and SageAttention quantize $\QK$ to
INT8~\citep{chen2024intflashattention,zhang2025sageattention}, the latter
after subtracting the key channel mean,
SageAttention2~\citep{zhang2024sageattention2,zhang2025sageattention2pp}
moves $\QK$ to INT4 and $\PV$ to FP8 with an optional global mean subtraction
on $\mathbf V$, and SageAttention3~\citep{zhang2025sageattention3}
quantizes both products to NVFP4.
FlashAttention-3 adds block quantization and a Hadamard rotation of the query
and key to its FP8 path~\citep{shah2024flashattention}. All of them reduce
the probability error and leave the value quantizer on the sequence-order
layout, and Attn-QAT recovers the remaining loss by
retraining~\citep{zhang2026attnqat}. On the kernel side, FlashAttention-4 and
Attn-QAT report that the exponential and conversion work of softmax, not the
matrix multiplications, bounds attention throughput on
B200~\citep{zadouri2026flashattention4,zhang2026attnqat}.
\method addresses
both open parts: the value error of video DiTs and the scalar
probability path on datacenter GPUs, without retraining and without a
per-model fit.

\section{\method}
\label{sec:method}

\newcommand{\motrule}{%
  \begin{minipage}[t]{0.012\textwidth}\centering
    \parbox[c][\panelheight][c]{\linewidth}{\centering
      \smash{\tikz[baseline=0pt]{\draw[line width=1pt, dotted]
        (0,\the\panelruletop) -- (0,-\the\panelrulebot);}}}
  \end{minipage}}
\begin{figure*}[t]
  \centering
  \begin{minipage}[t]{0.4925\textwidth}\centering
    \parbox[c][\panelheight][c]{\linewidth}{\centering
      \includegraphics{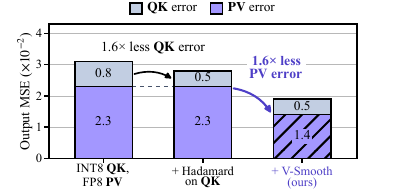}}\\[3pt]
    {\scriptsize \captiona\ $\PV$ error survives $\QK$ smoothing}
  \end{minipage}\hfill\motrule\hfill
  \begin{minipage}[t]{0.4925\textwidth}\centering
    \parbox[c][\panelheight][c]{\linewidth}{\centering
    \raisebox{13pt}[\panelheight][0pt]{%
    \begin{tikzpicture}[
      font=\fontsize{7}{8}\selectfont,
      box/.style={rounded corners=1.5pt, draw=black,
                  minimum width=13.6mm, minimum height=6.3mm, align=center,
                  inner sep=1pt, line width=0.75pt},
      high/.style={box, fill=nxsteel, text=white},
      low/.style={box, fill=nxmist},
      ours/.style={box, fill=nxsoft, text=black, line width=1.0pt},
      flow/.style={-{Latex[length=1.4mm,width=1.1mm]}, line width=0.5pt,
                   shorten >=1.1mm, shorten <=1.1mm},
      rowlab/.style={anchor=east, align=right, inner xsep=1pt},
    ]
      \node[high] (qk0) at (0,0)        {$\QK$\\BF16};
      \node[high] (sm0) at (2.02,0)     {Softmax\\FP32};
      \node[high] (pv0) at (4.04,0)     {$\PV$\\BF16};
      \node[low]  (qk1) at (0,-0.96)    {$\QK$\\INT8};
      \node[high] (sm1) at (2.02,-0.96) {Softmax\\FP32};
      \node[low]  (pv1) at (4.04,-0.96) {$\PV$\\FP8};
      \node[low]  (qk2) at (0,-1.92)    {$\QK$\\INT8};
      \node[ours] (sm2) at (2.02,-1.92) {Softmax\\FP8};
      \node[low]  (pv2) at (4.04,-1.92) {$\PV$\\FP8};
      \foreach \i in {0,1,2} {
        \draw[flow] (qk\i) -- (sm\i);
        \draw[flow] (sm\i) -- (pv\i);
      }
      \node[rowlab] at (-0.72,0)     {BF16};
      \node[rowlab] at (-0.72,-0.96) {SageAttn2};
      \node[rowlab, text=nxdark] at (-0.72,-1.92) {\textbf{Ours}};
      \node[draw=black, dashed, dash pattern=on 1.1pt off 1.1pt,
            line width=0.55pt, rounded corners=2pt, inner sep=1.6pt,
            fit=(sm0)(sm1)] (fp32grp) {};
      \node[anchor=south, align=center, inner sep=1pt] (fp32note)
        at (2.02,0.70) {softmax stays in FP32};
      \draw[{Latex[length=1.4mm,width=1.1mm]}-, line width=0.5pt, color=nxdark]
        (fp32grp.north) -- (fp32note.south);
    \end{tikzpicture}}}\\[3pt]
    {\scriptsize \captionb\ FP32 exponentiation stays on the critical path}
  \end{minipage}
  \caption{\textbf{Video diffusion leaves two bottlenecks that QK-centric low-bit attention does not touch.}
(a) Output error on Wan2.2.
A Hadamard rotation of $\mathbf{QK}$ shrinks the probability term by \QKHadamardGain, but leaves the larger value term exactly where it was: the value quantizer, not the $\mathbf{QK}$ quantizer, is what bounds fidelity.
(b) Even once both matrix products are low-bit, softmax's FP32 exponentiation and its cast still sit on the critical path between them.}
\label{fig:bottlenecks}
  \label{fig:motivation}
\end{figure*}

V-Smooth reorders the value tokens so that each hardware block holds tokens with similar values, quantizes the residual after subtracting the block mean, and restores the mean inside the online recurrence (\cref{fig:vsmooth}).
ExpCast-FP8 encodes the E4M3 probability code directly from the log-domain score, removing the FP32 exponential and the FP32-to-FP8 cast from the softmax stage on datacenter GPUs (\cref{fig:expcast}).

\subsection{Preliminaries and Motivation}
\label{sec:preliminaries}

\label{sec:prelim-low-bit-attention}
\paragraph{Online softmax.}
For $N$ spatiotemporal tokens and head dimension $d$, attention computes $\mathbf S=\mathbf Q\mathbf K^{\top}/\sqrt d$, $\mathbf P=\operatorname{softmax}(\mathbf S)$, and $\mathbf O=\mathbf P\mathbf V$.
A low-bit implementation replaces each operand by $\mathbf{X}_q=s_X\odot\widehat{\mathbf{X}}$, with $s_X$ broadcast at the chosen quantization granularity.
FlashAttention streams query tiles against key/value tiles without materializing the $N\times N$ matrices~\citep{dao2022flashattention}, keeping a running row maximum $m_i$, normalizer $l_i$, and numerator $\mathbf{A}_i$:
\begin{align}
  m_i'&=\max\!\left(m_i,\operatorname{rowmax}(\mathbf{S}_{ij})\right),
  &\alpha_i&=\exp(m_i-m_i'),
  &\widetilde{\mathbf{P}}_{ij}&=\exp(\mathbf{S}_{ij}-m_i'),
  \nonumber\\
  \mathbf{A}_i&\leftarrow\alpha_i\mathbf{A}_i+
    \widetilde{\mathbf{P}}_{ij}\mathbf{V}_j,
  &l_i&\leftarrow\alpha_i l_i+\widetilde{\mathbf{P}}_{ij}\mathbf{1},
  &\mathbf{O}_i&=\mathbf{A}_i/l_i.
  \label{eq:tiled-low-bit-attention}
\end{align}
\label{sec:prelim-value}
\paragraph{The value quantization bottleneck.}
Let $\mathbf{P}_q=s_P\odot\widehat{\mathbf{P}}$ and $\mathbf{V}_q=s_V\odot\widehat{\mathbf{V}}$ be the reconstructed low-bit operands, and let $\mathbf{O}_q=\mathbf{P}_q\mathbf{V}_q$.
Then
\begin{equation}
  \mathbf{O}-\mathbf{O}_q = (\mathbf{P}-\mathbf{P}_q)\mathbf{V} + \mathbf{P}_q(\mathbf{V}-\mathbf{V}_q).
  \label{eq:attention-error-decomposition}
\end{equation}
Key smoothing and per-block scaling~\citep{zhang2025sageattention}, together with Hadamard rotation~\citep{shah2024flashattention}, reduce the first term.
On Wan2.2 the second term then accounts for \VErrorShare of the output error (\cref{fig:motivation}\captiona).
Given that the value error grows with the norm of the block and with the outlier that sets the scale~\citep{lisvdquant}, we aim to reduce the norm of the value tensor and eliminate the outliers within it.

In query and key tensors, outliers concentrate in a few channels shared by all tokens, which can be removed by applying rotation or subtracting a channel mean~\citep{shah2024flashattention,kvquant,zhang2025sageattention3}.
Value outliers instead sit in a few tokens, and the channels those tokens spike in shift across heads, layers, and steps.
Such a token sets the scale of its whole block.
A \textit{rotation} mixes channels within a token but preserves its norm, so the outlier token survives (\cref{tab:reorder}).
\textit{Subtracting a block mean}, where blocks are formed by partitioning tokens in their original order, removes only a few outliers.
Mean subtraction helps only when the tokens within a block share a common component, and whether this holds depends heavily on the value tensor, which varies substantially across inputs (\cref{tab:reorder}).
We therefore group similar tokens in the value tensor with an online algorithm, so that every block shares a component worth subtracting (\cref{sec:method-value}).

\label{sec:prelim-softmax}
\paragraph{The softmax bottleneck.}
The $\QK$ and $\PV$ products run on Tensor Cores, whereas the online softmax between them, the row maximum, the exponential, and the FP32-to-E4M3 cast of every probability, runs on CUDA cores and the multi-function unit (MUFU).
FlashAttention overlaps the two stages across tiles, so the time per tile is the longer of the two~\citep{shah2024flashattention}.
For 8-bit attention at head dimension 128, the exponentials alone take as many MUFU cycles on H200 as the two FP8 products take on its Tensor Cores.
On B200 they take twice as many, since its Tensor Cores are twice as fast while its MUFU still issues 16 exponentials per SM per clock~\citep{shah2024flashattention,zadouri2026flashattention4}.
The Tensor Cores therefore wait for probability tiles (\cref{fig:motivation}\captionb).

\subsection{V-Smooth: Value-Guided Token Smoothing}
\label{sec:method-value}
\begin{figure}[t]
  \centering
  \begin{minipage}[t]{0.675\textwidth}\centering
  \parbox[c][\figthreeheight][c]{\linewidth}{\centering
  \begin{tikzpicture}[
    font=\scriptsize,
    cell/.style={draw=black!75, line width=0.5pt, minimum width=5.6mm,
                 minimum height=4.0mm, inner sep=0pt, anchor=center,
                 font=\fontsize{6}{7}\selectfont},
    cellb/.style={cell, font=\fontsize{6}{7}\bfseries\selectfont},
    tok/.style={anchor=east, font=\fontsize{6}{7}\selectfont, text=black!60,
                inner sep=1.2pt},
    head/.style={anchor=south, align=center, inner sep=0pt,
                 font=\fontsize{6}{7}\selectfont},
    verdict/.style={anchor=north, inner sep=0pt, align=center,
                    font=\fontsize{6}{7}\selectfont},
    meanbox/.style={draw=black!60, dashed, line width=0.5pt, inner sep=0.7pt},
    move/.style={-{Latex[length=1.5mm,width=1.2mm]}, nxdark, line width=0.8pt},
    verb/.style={anchor=south, align=center, text=nxdark, fill=white,
                 font=\fontsize{6}{7}\selectfont, inner sep=1pt},
    rowlab/.style={tok, inner xsep=2.6pt},
  ]
    \def\xa{0.42}\def\xb{3.39}\def\xc{6.36}
    \input{Secs/fig_vsmooth_tiles}
    \node[rowlab] at (AM1.west) {$-$ mean};
    \node[rowlab] at (AR1.west) {$=$ residual};
    \node[head] at ([yshift=1.3mm]$(A11.north)!0.5!(A14.north)$)
      {\textbf{Sequence order}\\$t$'s 128 neighbours};
    \node[tok] at (A11.west) {\Atoki};
    \node[tok] at (A21.west) {\Atokii};
    \node[tok] at (A31.west) {\Atokiii};
    \node[meanbox, fit=(AM1)(AM4)] {};
    \node[verdict] at ([yshift=-2.4mm]$(AR1.south)!0.5!(AR4.south)$)
      {\textcolor{nxslate}{\textbf{\Aremoved}}\\removed};
    \node[head] at ([yshift=1.3mm]$(B11.north)!0.5!(B14.north)$)
      {\textbf{Static cube} (DeltaQuant)\\$t$'s 4\texttimes2\texttimes16 cube};
    \node[tok] at (B11.west) {\Btoki};
    \node[tok] at (B21.west) {\Btokii};
    \node[tok] at (B31.west) {\Btokiii};
    \node[meanbox, fit=(BM1)(BM4)] {};
    \node[verdict] at ([yshift=-2.4mm]$(BR1.south)!0.5!(BR4.south)$)
      {\textcolor{nxsteel}{\textbf{\Bremoved}}\\removed};
    \node[head] at ([yshift=1.3mm]$(C11.north)!0.5!(C14.north)$)
      {\textbf{V-Smooth} (ours)\\$t$'s $k$-means cluster};
    \node[tok] at (C11.west) {\Ctoki};
    \node[tok] at (C21.west) {\Ctokii};
    \node[tok] at (C31.west) {\Ctokiii};
    \node[meanbox, fit=(CM1)(CM4)] {};
    \node[verdict] at ([yshift=-2.4mm]$(CR1.south)!0.5!(CR4.south)$)
      {\textcolor{nxdark}{\textbf{\Cremoved}}\\removed};
    \draw[move] (\xa+2.05,-2.46) to[bend right=18] (\xb+0.75,-2.46);
    \node[verb, anchor=north] at (3.30,-2.34) {regroup\\by cube};
    \draw[move] (\xb+2.05,-2.46) to[bend right=18] (\xc+0.75,-2.46);
    \node[verb, anchor=north] at (6.28,-2.34) {sort by\\cluster};
  \end{tikzpicture}}
  \end{minipage}\hfill
  \begin{minipage}[t]{0.012\textwidth}\centering
    \parbox[c][\figthreeheight][c]{\linewidth}{\centering
      \smash{\tikz[baseline=0pt]{\draw[line width=1pt, dotted]
        (0,\the\figthreeruletop) -- (0,-\the\figthreeruletop);}}}
  \end{minipage}\hfill
  \begin{minipage}[t]{0.30\textwidth}\centering
    \parbox[c][\figthreeheight][c]{\linewidth}{\centering
      \raisebox{5.7pt}[\figthreeheight][0pt]{%
        \Asset[\linewidth]{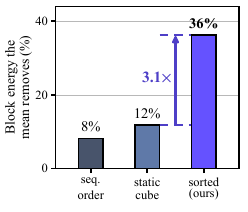}{1.39in}}}
  \end{minipage}
  \caption{\textbf{Sorting is what makes the block mean worth subtracting.}
  Left: one token $t$ of a Wan2.2 head and its two blockmates under three token
  orders. The three value rows are identical, only $t$'s company changes, and
  under them sit the block mean and $t$'s residual. Over all 128 channels
  the mean removes \Aremoved, \Bremoved\ and \Cremoved\ of the block energy, and
  this head's E4M3 value error falls \ExampleHeadFPEightGain. Right: the same
  share averaged over all \NumVSmoothBlocks\ blocks of 100 heads. $t$ is picked
  so that each of its three blocks removes the mean share for its own order,
  which is why the two panels print the same numbers. The static cube is
  DeltaQuant's~\citep{li2026deltaquant}.}
  \label{fig:vsmooth}
\end{figure}

\begin{wraptable}{r}{0.36\textwidth}
  \vspace{-\baselineskip}
  \centering
  {\setlength{\tabcolsep}{2.5pt}\scriptsize
\begin{tabular}{lcc}
\toprule
Token order & rMSE $\downarrow$ & Overhead \\
 & (\texttimes\,10\textsuperscript{\textminus4}) & (\% attn) \\
\midrule
Sequence & 1.81 & 0 \\
Hadamard on $\mathbf V$ & 1.81 & 1.5 \\
Static cube (DeltaQuant) & 1.74 & 2.7 \\
$k$-means & 1.28 & 16.5 \\
\ours $k$-means, warm-started & 1.28 & 9.5 \\
Balanced $k$-means & 1.17 & 84.5 \\
\bottomrule
\end{tabular}}

  \caption{\textbf{Only grouping decided from the values lowers the value
  error.} Token order fed to the value quantizer, Wan2.2 on RTX PRO 6000, 100
  heads, every row with block demeaning and per-channel FP8 $\mathbf V$.
  Overhead is reorder and gather time over the attention time of one call,
  on a step that groups.}
  \label{tab:reorder}
\end{wraptable}
\paragraph{Value-guided permutation.}
For each batch and head, an online $k$-means over the value tokens assigns a label $z_t$ to every token.
Sorting the labels yields a permutation $\pi=\operatorname{argsort}(z)$, applied to $\mathbf K$ and $\mathbf V$ while $\mathbf Q$ keeps its order:
\begin{equation}
  \mathbf{K}'=\mathbf{K}[\pi],\quad
  \mathbf{V}'=\mathbf{V}[\pi],\quad
  \mathbf{P}'=\operatorname{softmax}\!\left(\mathbf{Q}{\mathbf{K}'}^\top/\sqrt d\right).
  \label{eq:kv-reordering}
\end{equation}
Permuting the keys permutes the columns of $\mathbf P$ exactly as the rows of $\mathbf V$, so $\mathbf P'\mathbf V'=\mathbf P\mathbf V$ and the output of the non-causal self-attention of video DiTs is unchanged.

\paragraph{Block demeaning.}
The permuted values are partitioned into the fixed hardware blocks of $B_v=\BlockSizeV$ rows.
V-Smooth subtracts one mean per block and quantizes only the residual:
\begin{equation}
  \boldsymbol{\mu}_j
  =\tfrac{1}{B_v}{\mathbf{V}'_j}^{\top}\mathbf{1},\qquad
  \mathbf{R}_j=\mathbf{V}'_j-\mathbf{1}\boldsymbol{\mu}_j^{\top},
  \label{eq:block-demeaning}
\end{equation}
and the residual goes through the value quantizer of the host kernel, per-channel E4M3 at 8 bits and NVFP4 at 4 bits.
We write the result $\mathbf R_{q,j}$, with the subscript $q$ as in \cref{eq:attention-error-decomposition}.
We call the squared Frobenius norm of a block its \emph{energy}.
Among all vectors one could subtract, the mean leaves the least of it:
\begin{equation}
  \|\mathbf{V}'_j-\mathbf{1}\boldsymbol{c}^{\top}\|_F^2
  =\|\mathbf{R}_j\|_F^2
  +B_v\|\boldsymbol{c}-\boldsymbol{\mu}_j\|_2^2,
  \qquad
  \|\mathbf{R}_j\|_F^2=\|\mathbf{V}'_j\|_F^2-B_v\|\boldsymbol\mu_j\|_2^2,
  \label{eq:optimal-value-mean}
\end{equation}
so the share the mean removes is $B_v\|\boldsymbol\mu_j\|_2^2/\|\mathbf V'_j\|_F^2$, large only when the rows of the block share a common component.
Averaged over the blocks of 100 Wan2.2 heads, that share is \DemeanEnergySequence in sequence order, \CubeEnergyRemoved under the fixed cube of DeltaQuant, and \ResidualEnergyDrop after sorting (\cref{fig:vsmooth}).
The same table prices the two alternatives of \cref{sec:prelim-value} on the value error itself: a Hadamard rotation of $\mathbf V$ changes it by \HadamardVDelta, and the fixed cube recovers \CubeGain (\cref{tab:reorder}).
Each mean is one 16-bit vector per block, 0.125 bit per value element.

\paragraph{Unbalanced clusters cost little.}
Sorting places each cluster in one contiguous run, so at most $k-1$ of the $T/B_v$ blocks mix clusters and only those remove less energy.
Balancing every cluster to exactly $B_v$ tokens closes that gap by a further \BalancedGain of error, at \BalancedCost the grouping cost and \BalancedShare of the attention time at the deployed shape (\cref{tab:reorder}).
Plain $k$-means is therefore the operating point, and warm-starting each step from the previous step's centroids halves its cost for an error change inside the clustering's own run-to-run spread.
The schedule of \cref{sec:method-additional} reuses the permutation across the window and restricts grouping to the early denoising steps.

\paragraph{Online mean restoration.}
Substituting $\mathbf V'_j=\mathbf R_j+\mathbf 1\boldsymbol\mu_j^{\top}$ into the numerator update of \cref{eq:tiled-low-bit-attention} splits the value product into a low-bit part and a rank-one part:
\begin{equation}
  \mathbf{A}_i \leftarrow \alpha_i\mathbf{A}_i
    +\widetilde{\mathbf{P}}_{q,ij}\,\mathbf{R}_{q,j}
    +\mathbf{r}_{ij}\boldsymbol{\mu}_j^\top,
  \qquad
  l_i\leftarrow\alpha_i l_i+\mathbf{r}_{ij},
  \qquad
  \mathbf r_{ij}=\widetilde{\mathbf P}_{ij}\mathbf 1,
  \label{eq:vc-online-update}
\end{equation}
where $\widetilde{\mathbf P}_{ij}=\exp(\mathbf S_{ij}-m_i')$ is the unnormalized probability tile on the permuted keys and $\mathbf r_{ij}$ is its row sum, which the normalizer $l_i$ already accumulates.
The Tensor Core multiplies the low-bit probability tile by the low-bit residual as before, and the mean adds one outer product $\mathbf r_{ij}\boldsymbol\mu_j^{\top}$ on CUDA cores.
It lives in the same accumulator as the product, so the rescaling by $\alpha_i$ when a later tile raises the running maximum covers it, and the mean is restored exactly with no second pass over $\mathbf V$ and no extra buffer.

\subsection{ExpCast-FP8: Direct Probability Encoding}
\label{sec:method-softmax}
\begin{figure}[t]
  \centering
  \begin{lrbox}{\expcastconv}
    \begin{minipage}{2.62in}
      {\scriptsize\textbf{Standard: exponentiate, then cast}}
\begin{lstlisting}[style=nxcode]
fp32  p     = exp2(z);     // slow FP32 exp
e4m3  p_fp8 = to_e4m3(p);  // cast to fp8
\end{lstlisting}
    \end{minipage}
  \end{lrbox}
  \begin{lrbox}{\expcastours}
    \begin{minipage}{2.62in}
      {\scriptsize\color{nxdark}\textbf{\method: write the E4M3 byte}}
\begin{lstlisting}[style=nxcode]
fp32  c     = 8*z + 56 + beta;    // fast FMA
uint8 code  = round(clip(c, 0, 120));
e4m3  p_fp8 = view_as_e4m3(code); // no-op
\end{lstlisting}
    \end{minipage}
  \end{lrbox}
  \setlength{\tabcolsep}{0pt}
  \begin{tabular}{@{}p{2.80in}@{\hspace{0.08in}}p{2.62in}@{}}
    \vspace{0pt}
    \vskip 0.086in
    \centerline{\begin{tikzpicture}
      \node[draw=nxgold, fill=nxgoldpale, line width=0.9pt, rounded corners=4pt,
            inner sep=5pt, anchor=north] (conv) at (0,0) {\usebox{\expcastconv}};
      \node[draw=nxdark, fill=nxpale, line width=0.9pt, rounded corners=4pt,
            inner sep=5pt, anchor=south] (ours) at (0,-1.695in)
           {\usebox{\expcastours}};
      \draw[-{Latex[length=2.4mm,width=2mm]}, line width=1.1pt, color=nxdark]
        ($(conv.south)+(0,-2.5mm)$) -- ($(ours.north)+(0,2.5mm)$);
      \coordinate (mid) at ($(conv.south)!0.5!(ours.north)$);
      \node[font=\normalsize\bfseries, text height=1.7ex, text depth=0.5ex,
            color=nxblue] at ($(mid)+(-0.693in,0)$) {no exponential};
      \node[font=\normalsize\bfseries, text height=1.7ex, text depth=0.5ex,
            color=nxblue] at ($(mid)+(0.693in,0)$) {no cast to e4m3};
    \end{tikzpicture}}
    &
    \vspace{0pt}
    \centerline{\includegraphics{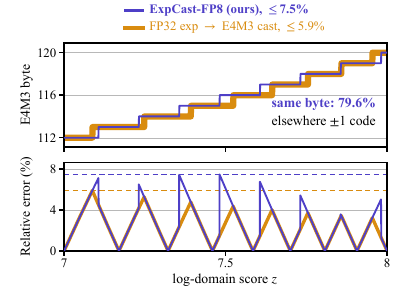}}
    \tabularnewline[4pt]
    \centerline{\footnotesize\textbf{(a) A linear map replaces exp and cast}} &
    \centerline{\hspace*{0.11in}\footnotesize\textbf{(b) The approximation error
      is small}}
    \tabularnewline
  \end{tabular}
  \caption{\textbf{\method writes the byte the conventional path writes.}
  \textbf{(a)} Both paths take the same log-domain score $z$. The standard path
  evaluates an FP32 exponential and casts the result to E4M3; \method replaces
  both steps with one fused multiply-add, because an E4M3 byte is an affine
  function of the log of the value it stores. Reading the result as E4M3 costs no
  instruction: those bits already are the byte the $\PV$ matrix multiply
  consumes. \textbf{(b)} The byte each path writes, and the relative error of the
  probability it decodes to. The two write the same byte on \SameByteShare of the
  interval and are one code apart on the rest. Every unit interval of $z$ looks
  the same, so one is enough.}
  \label{fig:expcast}
\end{figure}

On B200 and H200 the softmax stage takes longer than the FP8 matrix stage (\cref{sec:prelim-softmax}).
ExpCast-FP8 shortens it by writing the E4M3 probability code directly, without evaluating an exponential.

The idea is that an E4M3 byte is already a logarithmic representation of the number it stores.
A byte with exponent field $e$ and mantissa field $m$ encodes the value $v=2^{e-7}(1+m/8)$.
By that definition alone, the exponent field is the integer part of $\log_2 v$ shifted by the bias, $e=\lfloor\log_2 v\rfloor+7$, and the mantissa field approximates its fractional part, $m\approx8\operatorname{frac}(\log_2 v)$.
Read as the integer $8e+m$, the byte therefore equals $8\log_2 v+56+\varepsilon(m)$, where $\varepsilon(m)=m-8\log_2(1+m/8)$ is the error of replacing $\log_2(1+m/8)$ by $m/8$ and never exceeds one code in magnitude (\cref{fig:expcast-beta}).
The byte is thus an affine function of the log of the value, so producing it from a log-domain number costs one multiply-add.

Online softmax already keeps the log-domain score $u=(s-m_i')\log_2 e\leq0$ of every element, and we scale $2^{u}$ by $2^{8}$ so that the row maximum lands at 256.
Substituting $\log_2 v=u+8$ gives the code
\begin{equation}
  c(u)=\operatorname{clip}_{[0,120]}\!\left(
    \operatorname{Round}\bigl(8(u+8)+56+\beta\bigr)\right),
  \qquad \beta=-0.35,
  \label{eq:direct-fp8}
\end{equation}
with $\operatorname{Round}$ round-to-nearest-even and $56=8\times7$ the E4M3 exponent bias.
Since $\varepsilon(m)$ is not known before the byte is written, we replace it by the constant $\beta$ that centers it, whose minimax value is $-0.3443$ (\cref{fig:expcast-beta}).
No constant here is fitted: $8$ and $56$ are read off the encoding, and $\beta$ is the minimax centering of the one term the encoding leaves behind~\citep{mitchell1962computer,schraudolph1999fast}.
The clip keeps the code in the normal range and maps underflow to zero.
One fused multiply-add and one integer conversion replace the FP32 exponential and the FP32-to-E4M3 cast.

The direct code matches the exponentiate-then-cast result.
For $u=-1.60$, \cref{eq:direct-fp8} gives $106.85$, which rounds to the byte $107=8\cdot13+3$, i.e.\ the value $88$.
Exponentiating and casting gives $2^{u+8}=84.4$, which lies in the binade $[64,128)$ where the E4M3 step is $8$ and so also rounds to $88$.
Over a whole doubling the two paths write the same byte on \SameByteShare of it and differ by one code on the rest (\cref{fig:expcast}, top).

Per element the error is at most one code, a relative error of up to 7.5\% (\cref{fig:expcast}, bottom).
But attention consumes the normalized row, and the error is a function of the mantissa bits alone, not of the magnitude of the entry, so it acts almost as a common factor and mostly cancels in the normalizer.
\cref{prop:direct-fp8-row} bounds what survives.
\begin{proposition}[Attention-row guarantee]
\label{prop:direct-fp8-row}
For a row whose entries remain in the normal E4M3 range ($u_k\geq-14$), let
$\mathbf p$ and $\widehat{\mathbf p}$ be the exact and ExpCast-FP8 probability
vectors. For $\mathbf o=\mathbf p^\top\mathbf V'$ and
$\widehat{\mathbf o}=\widehat{\mathbf p}^{\top}\mathbf V'$, we have
\begin{equation}
  \operatorname{TV}(\mathbf p,\widehat{\mathbf p})<\mathbf{3.64\%},
  \qquad
  \|\widehat{\mathbf o}-\mathbf o\|_2
  \leq\mathbf{3.64\%}\operatorname{diam}_2(\mathbf V').
  \label{eq:direct-fp8-row-bound}
\end{equation}
\end{proposition}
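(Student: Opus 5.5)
The plan is to reduce both bounds to one fact: the ratio between the ExpCast-FP8 value and the exact value stays in a fixed interval $[a,b]$ for every entry. Then I bound how far a normalization can move a probability vector under such multiplicative distortions.

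\textbf{Step 1 (the ratio interval).} Write $\hat v_k$ for the value decoded from the code $c(u_k)$ of \cref{eq:direct-fp8}, and $v_k=2^{u_k+8}$ for the exact scaled probability. Define $\rho_k=\hat v_k/v_k$.

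The hypothesis $u_k\ge-14$ keeps the pre-rounding code $8(u_k+8)+56+\beta$ at least $8$. Also, $u_k\le 0$ gives at most $119.65$. So the clip is inactive and the code stays in the normal range. There $\hat v=2^{e-7}(1+m/8)$ with $8e+m=c$.

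Shifting $u$ by $1$ shifts the code by exactly $8$, which increments $e$ and leaves $m$ fixed. Hence $\rho$ is a periodic function of $\operatorname{frac}(u)$ alone. This is the property the paper points to: the error depends on the mantissa bits, not on the magnitude.

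I would compute $\rho$ explicitly on one unit interval of $u$. On each subinterval where $\operatorname{Round}$ returns a fixed code $8e+m$, $\rho(u)=2^{e-7}(1+m/8)/2^{u+8}$ is monotone, decreasing in $u$. So its extrema occur at the rounding breakpoints, where the pre-rounding code equals $n+\tfrac12$ for $n\in\{0,\dots,7\}$ mod $8$, together with the one-sided limits there.

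This is a finite check over $8$ mantissa codes with $\beta=-0.35$. It yields the constants $a=\min\rho$ and $b=\max\rho$, with $\max(b-1,1-a)$ around $7.5\%$, consistent with the per-element statement. This step is the main obstacle. It is a careful but finite case analysis, and the claimed $3.64\%$ depends on getting $b/a$ exactly. I expect $b/a\approx1.157$.

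\textbf{Step 2 (TV under bounded multiplicative distortion).} We have $\hat p_k=p_k\rho_k/\bar\rho$ with $\bar\rho=\sum_j p_j\rho_j$. Then
\[
\operatorname{TV}(\mathbf p,\widehat{\mathbf p})=\sum_{k:\rho_k>\bar\rho}p_k\Bigl(\tfrac{\rho_k}{\bar\rho}-1\Bigr).
\]
This expression is convex in the vector $(\rho_k)$ for fixed $\bar\rho$, so the worst case puts every $\rho_k$ at an endpoint. Suppose mass $w$ sits at $b$ and $1-w$ at $a$. Then
\[
\operatorname{TV}=\frac{w(1-w)(b-a)}{wb+(1-w)a}.
\]
Maximizing over $w$ gives $w^\star=\sqrt a/(\sqrt a+\sqrt b)$ and
\[
\operatorname{TV}\le\frac{\sqrt{b/a}-1}{\sqrt{b/a}+1}.
\]
Inserting the constants from Step 1 should give the strict bound $<3.64\%$. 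The inequality is strict because a finite row with the discrete attainable ratios cannot realize $w^\star$ exactly, or else the one-sided limits of Step 1 are not attained.

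\textbf{Step 3 (output bound).} Let $\mathbf d=\widehat{\mathbf p}-\mathbf p$. Since $\sum_k d_k=0$, split $\mathbf d=\mathbf d^+-\mathbf d^-$ with $\|\mathbf d^+\|_1=\|\mathbf d^-\|_1=\operatorname{TV}$. Then
\[
\widehat{\mathbf o}-\mathbf o=\operatorname{TV}\,(\bar{\mathbf x}^+-\bar{\mathbf x}^-),
\]
where $\bar{\mathbf x}^\pm$ are convex combinations of rows of $\mathbf V'$. Their distance is at most $\operatorname{diam}_2(\mathbf V')$, because the diameter of a convex hull equals that of its vertex set. This gives the second inequality directly from the first.
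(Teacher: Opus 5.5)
Your proposal is correct and follows the paper's proof essentially step for step. It uses the same mantissa-periodic ratio checked at the rounding breakpoints, which gives the paper's $b/a=1.0746/0.9290\approx1.157$; your eight periodic cells are the paper's nine cells on $f\in[0,1)$, with the carry cell $j=8$ merged into $j=0$. It then uses the same two-point endpoint reweighting with maximizer $q=\sqrt a/(\sqrt a+\sqrt b)$, and your $\mathbf d^{\pm}$ split is the paper's coupling argument for the output bound.

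One small correction: strictness needs no non-attainment argument, because $(\sqrt{b/a}-1)/(\sqrt{b/a}+1)\approx0.03637$ already lies below $0.0364$. In fact, under round-to-nearest-even both extreme ratios are attained, at the ties $8f-0.35=0.5$ and $3.5$. Likewise, at $u=-14$ the pre-rounding code is $7.65$ rather than at least $8$, though it still rounds to the smallest normal code $8$.
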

The bound is stated on the permuted values, but $\operatorname{diam}_2$ is a maximum over the set of rows and a permutation does not change that set, so $\operatorname{diam}_2(\mathbf V')=\operatorname{diam}_2(\mathbf V)$ and V-Smooth neither tightens nor loosens it.
If the underflow tail has normalized mass $\tau$, the right-hand side becomes $0.0364+\tau$.
On 204.8K attention rows from 100 Wan2.2 heads the total variation averages \MeasuredTV\ and every row stays within $0.0364+\tau$, the largest being \MeasuredTVMax.
Rows without underflow stay below \MeasuredTVNormal.
The FP32 exponential followed by an E4M3 cast averages \CastTV\ on the same rows.

\subsection{Fused Kernel and Amortized Grouping}
\label{sec:method-additional}

\paragraph{Fused preprocessing.}
Ahead of each attention call, the low-bit operands are produced by a chain of memory-bound passes: rotary embedding, the key channel mean and the value block means, the gather by $\pi$, the $\QK$ Hadamard rotation, and the quantizers themselves.
Run eagerly, every pass writes a full high-precision tensor back to HBM for the next one to read.
\method grows the fusion backwards from the quantizer until the entire chain is absorbed, so the permuted and rotated high-precision tensors never reach HBM.
The fused passes and the grouping kernel are hand-written in CuTe/CUDA rather than compiler-generated.
\cref{fig:prep-fusion} prices each stage against the unfused chain, and \cref{sec:appendix-fusion} lists what each operand's kernel absorbs and how the block means are scaled.

\paragraph{Amortized grouping schedule.}
Grouping costs more than the stage it joins.
On a step that groups, it adds \VSmoothCostUnamortized to the attention time of that step, and two reuses spread the cost over the schedule.
For every model, grouping and demeaning run on the first \GroupingStepFraction of the denoising steps, and the remaining steps run the plain low-bit kernel on the permutation the window left behind.
Within this window, attention layouts change little between adjacent denoising steps~\citep{li2025radial,xi2025sparse}, so the permutation is computed once and reused before it is refreshed.
Averaged over every denoising step, grouping then costs \GroupingOverhead of attention time (\cref{fig:efficiency}).
ExpCast-FP8 is independent of the schedule and stays enabled in every 8-bit datacenter call.

\section{Experiments}
\label{sec:experiments}

\subsection{Setup}
\label{sec:exp-setup}

\begin{table}[t]
  \centering
  \caption{\textbf{Fidelity of low-bit attention, grouped by the GPU
  class each configuration is deployed on.} Measured against the BF16
  FlashAttention-4 output of the same model and seed. S.C.\ is \VBenchDimA
  and I.Q.\ is \VBenchDimB, both near-saturated on these models.
  Bold is the best and underline the second best of a column within one
  GPU class. $^\dagger$ runs training-free.}
  \label{tab:main}
  {\setlength{\tabcolsep}{2.2pt}\scriptsize
\begin{tabular}{ll ccccc ccccc}
\toprule
 & & \multicolumn{5}{c}{Wan2.2-T2V-A14B, 720p} & \multicolumn{5}{c}{LongCat-Video, 480p} \\
\cmidrule(lr){3-7}\cmidrule(lr){8-12}
Method & QK/PV & PSNR\,$\uparrow$ & SSIM\,$\uparrow$ & LPIPS\,$\downarrow$ & S.C.\,$\uparrow$ & I.Q.\,$\uparrow$ & PSNR\,$\uparrow$ & SSIM\,$\uparrow$ & LPIPS\,$\downarrow$ & S.C.\,$\uparrow$ & I.Q.\,$\uparrow$ \\
\midrule
FlashAttention-4 & BF16 & \na & \na & \na & 0.932 & 0.703 & \na & \na & \na & 0.941 & 0.694 \\
\addlinespace[1.5pt]
\multicolumn{12}{l}{\emph{Datacenter GPUs}: NVIDIA B200, 8-bit} \\
\addlinespace[1pt]
SageAttention2 & 8/8 & 20.3 & 0.730 & 0.206 & \underline{0.931} & 0.703 & 23.1 & 0.793 & 0.122 & \textbf{0.941} & \textbf{0.697} \\
FlashAttention-4 (8-bit) & 8/8 & 18.8 & 0.686 & 0.248 & 0.929 & 0.704 & 21.3 & 0.753 & 0.154 & \underline{0.940} & \textbf{0.697} \\
\quad + QK Hadamard & 8/8 & 18.8 & 0.685 & 0.253 & 0.929 & \textbf{0.706} & 21.6 & 0.762 & 0.148 & \underline{0.940} & 0.695 \\
Attn-QAT$^\dagger$ & 4/8 & 15.5 & 0.561 & 0.428 & 0.913 & 0.699 & 16.4 & 0.556 & 0.398 & 0.908 & 0.656 \\
\addlinespace[1.5pt]
\ours \textbf{\method (V-Smooth)} & 8/8 & \textbf{22.6} & \textbf{0.792} & \textbf{0.152} & \underline{0.931} & 0.704 & \textbf{24.5} & \textbf{0.822} & \textbf{0.102} & \textbf{0.941} & \underline{0.696} \\
\ours \textbf{\method (V-Smooth + ExpCast)} & 8/8 & \underline{20.5} & \underline{0.744} & \underline{0.191} & \textbf{0.932} & \underline{0.705} & \underline{23.8} & \underline{0.807} & \underline{0.109} & \textbf{0.941} & \underline{0.696} \\
\specialrule{\lightrulewidth}{2.5pt}{0pt}
\specialrule{\lightrulewidth}{1.2pt}{1.5pt}
\multicolumn{12}{l}{\emph{Workstation GPUs}: NVIDIA RTX PRO 6000, 4-bit} \\
\addlinespace[1pt]
SageAttention3 & 4/4 & 13.9 & 0.510 & 0.466 & 0.925 & 0.697 & 15.1 & 0.526 & 0.382 & 0.938 & 0.692 \\
\quad + QK Hadamard & 4/4 & 14.0 & 0.516 & 0.454 & 0.928 & 0.702 & 15.1 & 0.530 & 0.375 & 0.938 & \textbf{0.695} \\
\addlinespace[1.5pt]
\ours \textbf{\method (V-Smooth)} & 4/4 & \textbf{16.8} & \textbf{0.614} & \textbf{0.318} & \textbf{0.931} & \textbf{0.707} & \textbf{18.7} & \textbf{0.663} & \textbf{0.226} & \textbf{0.939} & \textbf{0.695} \\
\midrule
 & & \multicolumn{5}{c}{HunyuanVideo-1.5, 720p} & \multicolumn{5}{c}{MiniMax-H3 distilled, 1344$\times$768} \\
\cmidrule(lr){3-7}\cmidrule(lr){8-12}
Method & QK/PV & PSNR\,$\uparrow$ & SSIM\,$\uparrow$ & LPIPS\,$\downarrow$ & S.C.\,$\uparrow$ & I.Q.\,$\uparrow$ & PSNR\,$\uparrow$ & SSIM\,$\uparrow$ & LPIPS\,$\downarrow$ & S.C.\,$\uparrow$ & I.Q.\,$\uparrow$ \\
\midrule
FlashAttention-4 & BF16 & \na & \na & \na & 0.933 & 0.680 & \na & \na & \na & 0.899 & 0.668 \\
\addlinespace[1.5pt]
\multicolumn{12}{l}{\emph{Datacenter GPUs}: NVIDIA B200, 8-bit} \\
\addlinespace[1pt]
SageAttention2 & 8/8 & 15.6 & 0.581 & 0.382 & \underline{0.931} & 0.676 & 19.9 & 0.723 & 0.252 & 0.897 & \textbf{0.671} \\
FlashAttention-4 (8-bit) & 8/8 & 15.9 & 0.592 & 0.370 & \textbf{0.932} & 0.676 & 20.2 & 0.731 & 0.243 & \textbf{0.899} & 0.668 \\
\quad + QK Hadamard & 8/8 & 16.0 & 0.593 & 0.367 & \textbf{0.932} & \underline{0.678} & \underline{20.5} & \underline{0.737} & \underline{0.233} & \underline{0.898} & \underline{0.669} \\
Attn-QAT$^\dagger$ & 4/8 & 12.2 & 0.459 & 0.562 & 0.911 & 0.662 & 15.0 & 0.581 & 0.467 & 0.896 & 0.643 \\
\addlinespace[1.5pt]
\ours \textbf{\method (V-Smooth)} & 8/8 & \textbf{18.4} & \textbf{0.675} & \textbf{0.271} & \textbf{0.932} & \textbf{0.679} & \textbf{21.0} & \textbf{0.751} & \textbf{0.218} & \textbf{0.899} & \underline{0.669} \\
\ours \textbf{\method (V-Smooth + ExpCast)} & 8/8 & \underline{17.5} & \underline{0.648} & \underline{0.301} & \underline{0.931} & 0.677 & 20.2 & 0.726 & 0.248 & 0.897 & \underline{0.669} \\
\specialrule{\lightrulewidth}{2.5pt}{0pt}
\specialrule{\lightrulewidth}{1.2pt}{1.5pt}
\multicolumn{12}{l}{\emph{Workstation GPUs}: NVIDIA RTX PRO 6000, 4-bit} \\
\addlinespace[1pt]
SageAttention3 & 4/4 & 10.9 & 0.391 & 0.652 & 0.927 & 0.677 & 16.1 & 0.587 & 0.408 & 0.907 & \textbf{0.686} \\
\quad + QK Hadamard & 4/4 & 10.9 & 0.390 & 0.654 & 0.932 & \textbf{0.679} & 16.0 & 0.588 & 0.403 & 0.908 & \textbf{0.686} \\
\addlinespace[1.5pt]
\ours \textbf{\method (V-Smooth)} & 4/4 & \textbf{13.8} & \textbf{0.486} & \textbf{0.481} & \textbf{0.934} & \textbf{0.679} & \textbf{16.6} & \textbf{0.605} & \textbf{0.380} & \textbf{0.909} & \textbf{0.686} \\
\bottomrule
\end{tabular}}

\end{table}

\begin{figure}[t]
  \centering
  \Asset[\textwidth]{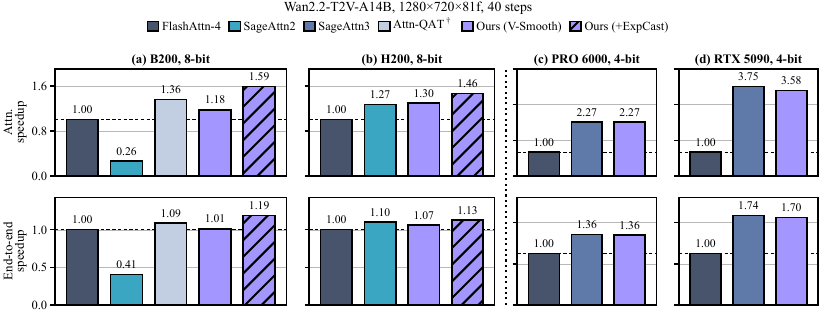}{2.35in}
  \caption{\textbf{\method converts low-bit peak throughput into speedup on
  every card we evaluate.} Attention latency (top) and clip wall-clock time
  (bottom) on Wan2.2, normalized to BF16 FlashAttention-4 on the same card.
  The top row times the attention kernel alone in steady state, with all preprocessing outside the timer; the
  bottom row is the clip wall clock, which prices that preprocessing and
  everything else the model does. The
  dotted rule separates the datacenter cards from the workstation cards, and the
  second title line of each panel indicates the precision at which \method runs
  there.
  $^\dagger$Run training-free, Attn-QAT falls \AttnQATTrainingFreeGap
  of PSNR below SageAttention2 in \cref{tab:main}.}
  \label{fig:efficiency}
\end{figure}

\paragraph{Models.}
We evaluate \method on four open-weight video DiTs: Wan2.2-T2V-A14B~\citep{wan2025wan} at 720p, LongCat-Video~\citep{team2025longcat} at 480p, HunyuanVideo-1.5~\citep{wu2025hunyuanvideo} at 720p, and MiniMax-H3~\citep{minimax2026h3} at 1344\texttimes768 with the third-party 8-step acceleration LoRA of \citet{alibabapai2026h3acc}.
For each model we generate \NumPrompts videos from MovieGen Bench prompts~\citep{polyak2024moviegen}, sharing prompts and seeds across all methods.

\paragraph{Metrics.}
Following prior work on low-bit video generation~\citep{li2026deltaquant,xi2026quantvideogen}, we measure fidelity against the BF16 output of the same model.
We report Peak Signal-to-Noise Ratio (PSNR), Structural Similarity Index Measure (SSIM)~\citep{wang2004image}, and Learned Perceptual Image Patch Similarity (LPIPS)~\citep{zhang2018unreasonable}.
We also report two VBench dimensions~\citep{huang2024vbench}, \VBenchDimA and \VBenchDimB, which separate drift across frames from loss of detail within one.
Efficiency is reported as attention speedup, the time of one attention call, and as end-to-end speedup, the wall clock of generating one clip.

\paragraph{Baselines.}
Every fidelity number is scored against the BF16 FlashAttention-4~\citep{zadouri2026flashattention4} output of the same model and seed, and every speedup is normalized to that kernel on the same card.
At 8 bits we compare against SageAttention2~\citep{zhang2024sageattention2} in its released INT8/FP8 configuration, against an 8-bit FlashAttention-4 that rounds both operands to the nearest code per block and smooths neither, and against Attn-QAT~\citep{zhang2026attnqat}.
Attn-QAT requires model-specific training, so for a fair comparison against a training-free method we run its training-free configuration.
At 4 bits we compare against SageAttention3~\citep{zhang2025sageattention3}.
The 8-bit FlashAttention-4 and SageAttention3 baselines also run with a fused $\QK$ Hadamard transform~\citep{dao2023fasthadamard}, the strongest training-free treatment of the score product.

\paragraph{Implementation.}
We implement \method in CuTe/CUDA by modifying the FlashAttention-4/SageAttention kernel in place, so V-Smooth and ExpCast-FP8 are timed against the pipeline they change.
We deploy 8 bits with both mechanisms on the datacenter GPUs, B200 and H200, and 4 bits with V-Smooth alone on workstation Blackwell, the RTX PRO 6000 and RTX 5090, where the softmax stage is not the bottleneck.
We do not evaluate a 4-bit datacenter configuration, because an on-the-fly NVFP4 $\mathbf P$ puts its per-16-element scale computation on the softmax critical path and ExpCast-FP8 does not remove it (\cref{sec:appendix-protocol}).
Frame counts, sampling configurations, metric definitions, and remaining details are in \cref{sec:appendix-protocol}.

\subsection{Main Results}
\label{sec:exp-main}

\textbf{V-Smooth is the most faithful low-bit attention at both precisions.}
\cref{tab:main} reports fidelity across the four models.
At 8 bits, V-Smooth leads every column on every model, improving PSNR over SageAttention2 by 2.3~dB on Wan2.2 and by 2.8~dB on HunyuanVideo-1.5 while reducing LPIPS by \LPIPSGainEightBit.
Fusing ExpCast-FP8 trades \ExpCastFidelityCost of this PSNR for the speedup reported in \cref{fig:efficiency}, and the combined kernel still outperforms SageAttention2 on all four models.
At 4 bits, where ExpCast-FP8 does not apply, V-Smooth improves over SageAttention3 by \VCPSNRGainFourBitWan on Wan2.2 and by 3.6~dB on LongCat-Video, and reduces LPIPS by up to 41\%.
On the two VBench dimensions, all training-free methods remain within 0.01 of the BF16 model, so these metrics do not distinguish between them.
Frame strips for one clip per model are provided in \cref{sec:appendix-qualitative}.

\textbf{The remaining error does not lie in $\QK$.}
Fusing a $\QK$ Hadamard into SageAttention3 shifts PSNR by at most 0.1~dB on any model, and into 8-bit FlashAttention-4 by at most \HadamardPSNRGain.
The score-product error that a rotation can remove has therefore already been eliminated.
V-Smooth instead operates on the value operand, which adds \VCPSNRGainEightBit at 8 bits and \VCPSNRGainFourBit at 4 bits over the matched-precision baseline.

\textbf{Attn-QAT depends on its training.}
Run training-free, Attn-QAT falls \AttnQATTrainingFreeGap of PSNR below SageAttention2 on every model, and it is the only method that perturbs the VBench scores.
Its published 4-bit results are obtained with retraining.
A deployment without model-specific data and GPU time therefore reproduces the row reported in \cref{tab:main}.
\method requires no such training.

\subsection{Efficiency}
\label{sec:exp-efficiency}

\cref{fig:efficiency} reports the cost of both configurations on Wan2.2.
On B200, \method computes attention \AttnSpeedupBTwoHundred faster than BF16 FlashAttention-4 and \SageSpeedupBTwoHundred faster than SageAttention2, whose kernel we measure to be slower than the BF16 baseline on this card.
On H200, the attention speedup is \AttnSpeedupHTwoHundred over BF16 FlashAttention-4 and \SageSpeedupHTwoHundred over SageAttention2.
End to end, a single Wan2.2 clip is generated \EndToEndSpeedupBTwoHundred faster on B200 and \EndToEndSpeedupHTwoHundred faster on H200.
On workstation Blackwell at 4 bits, V-Smooth computes attention \AttnSpeedupRTXPro faster on the RTX PRO 6000 and \AttnSpeedupRTXFiftyNinety faster on the RTX 5090, on par with SageAttention3 on the former and within \WorkstationSageGap of it on the latter.
The corresponding clip speedups are \EndToEndSpeedupRTXPro and \EndToEndSpeedupRTXFiftyNinety.
Since the two 4-bit kernels cost the same, they are separated only by the fidelity reported in \cref{tab:main}.

The additional work introduced by V-Smooth accounts for \AttnSpeedupCost of attention time.
The gather, the block mean and scale reductions, and the row-sum-times-mean term in the epilogue are amortized by the schedule of \cref{sec:method-additional}.
The $\QK$ Hadamard is free in time and, per \cref{tab:main}, nearly free in fidelity as well.

\subsection{Ablation Study}
\label{sec:exp-ablation}

\begin{wraptable}{r}{0.44\textwidth}
  \vspace{-\baselineskip}
  \centering
  {\setlength{\tabcolsep}{3pt}\scriptsize
\begin{tabular}{lcccc}
\toprule
Grouped steps & PSNR $\uparrow$ & SSIM $\uparrow$ & LPIPS $\downarrow$ &
  Latency (s) $\downarrow$ \\
\midrule
\ours \textbf{first 1/4} & 26.4 & \textbf{0.891} & \textbf{0.061} & \textbf{351.0} \\
uniform 1/4 & 23.8 & 0.828 & 0.107 & 353.2 \\
all steps & \textbf{26.9} & 0.888 & 0.063 & 367.0 \\
\bottomrule
\end{tabular}}

  \caption{\textbf{Where the grouping steps sit matters more than how
  many there are.} Every row runs V-Smooth with ExpCast-FP8 at 8 bits on every
  layer and differs only in which denoising steps it groups on. LongCat-Video,
  one H200. Latency is the wall clock of one clip. Fidelity is over a subset of 32
  prompts.}
  \label{tab:ablation}
\end{wraptable}
\textbf{Grouping on the first quarter of the steps is the operating point.}
V-Smooth groups on the first \GroupingStepFraction of the denoising steps.
\cref{tab:ablation} compares that window against the same number of steps spread uniformly over the schedule, and against grouping on every step.
The leading quarter gives up 0.5~dB of PSNR against grouping everywhere while beating it on SSIM and LPIPS, at a quarter of the work.
The uniform quarter does the same amount of work and differs only in placement, yet falls 2.6~dB behind: what matters is where the grouping sits, not how much of it there is.
The window also saves \GroupingWindowSaving of a clip (\GroupingLatencyLeading against \GroupingLatencyAll), and would save a larger share once sparse attention or quantized linear layers shorten the rest of the forward pass.

\begin{wrapfigure}{r}{0.5\textwidth}
  \vspace{-\baselineskip}
  \centering
  \includegraphics[width=\linewidth]{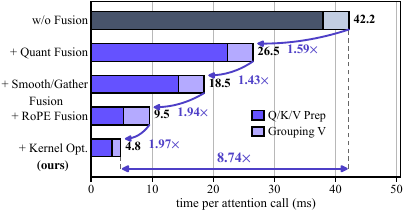}
  \caption{\textbf{What kernel fusion and optimization buy.} One V-Smooth
  attention call on Wan2.2, one B200. Rungs are cumulative. A DiT forward makes
  40 such calls: \PrepUnfusedForward unfused, \PrepFusedForward fused.}
  \label{fig:prep-fusion}
\end{wrapfigure}
\textbf{Every stage of the preprocessing chain has to be fused into the quantizer.}
\cref{fig:prep-fusion} folds the chain of \cref{sec:method-additional} into the quantization kernels one stage at a time.
Fusing the quantizers with the Hadamard rotation gives \PrepFusionQuant over the unfused chain, the two smoothing means and the permutation gather add \PrepFusionSmoothGather, and the rotary embedding \PrepFusionRoPE, after which the permuted and rotated tensors never reach HBM.
Hand-written kernels for those five passes and for the grouping add \PrepKernelOpt, for \PrepFusionTotal end to end, leaving grouping at \PrepGroupingShare of the \PrepFusedCall the whole chain then costs per attention call.

\subsection{Additional GPUs}
\label{sec:exp-b300}

\begin{wraptable}{r}{0.55\textwidth}
  \vspace{-\baselineskip}
  \centering
  {\setlength{\tabcolsep}{4pt}\scriptsize
\begin{tabular}{lcc}
\toprule
Kernel & PSNR $\uparrow$ & Attention speedup $\uparrow$ \\
\midrule
FlashAttention-4 (BF16) & \na & 1.00\texttimes \\
Naive FP8 & 17.1 & 1.31\texttimes \\
\ours \textbf{\method (V-Smooth + ExpCast)} & \textbf{18.4} & \textbf{1.47}\texttimes \\
\bottomrule
\end{tabular}}

  \caption{\textbf{\method on an NVIDIA B300.} MiniMax-H3 with $\QK$ and $\PV$
  both in FP8. PSNR is the mean over \NumPrompts prompts against the BF16
  FlashAttention-4 output at the same seed. Speedup is the attention call alone,
  at the steady-state median.}
  \label{tab:b300}
\end{wraptable}

Both mechanisms are kernel-level, so we also build them for the NVIDIA B300.
The card has no INT8 $\QK$ matrix multiply, so the kernel keeps both $\QK$ and $\PV$ in FP8, and we compare it against a naive FP8 kernel at the same precision (\cref{tab:b300}).
The attention call runs \AttnSpeedupBThreeHundred faster than BF16 FlashAttention-4, against \AttnSpeedupBThreeHundredNaive for the naive kernel, and reaches \PSNRBThreeHundred against its \PSNRBThreeHundredNaive on MiniMax-H3.
Quantizing the value operand and the softmax the way \method does is therefore both faster and more accurate than quantizing them naively at the same bit width.

\section{Conclusion}
\label{sec:conclusion}

We have presented \method, a training-free low-bit attention kernel for video diffusion.
Low-bit attention already speeds up the $\QK$ and $\PV$ matrix multiplications, so on video DiTs the remaining cost sits in the two steps next to them: quantizing the value operand and computing the softmax.
V-Smooth gathers similar value tokens into the same hardware block with an online $k$-means, quantizes only what is left after subtracting the block mean, and restores that mean inside the online softmax recurrence for one outer product per tile.
ExpCast-FP8 turns a log-domain score into an E4M3 probability code with a single fused multiply-add, replacing the exponential and the cast, and its error bound comes from the FP8 format itself, so it holds for every model without per-model tuning.
On Wan2.2, LongCat-Video, HunyuanVideo-1.5, and MiniMax-H3, \method stays closer to full-precision attention than the training-free baselines at the same bit width, and runs attention faster than BF16 FlashAttention-4 by \AttnSpeedupBTwoHundred on B200, \AttnSpeedupHTwoHundred on H200, \AttnSpeedupRTXPro on the RTX PRO 6000, and \AttnSpeedupRTXFiftyNinety on the RTX 5090.

\FloatBarrier
\bibliography{references}
\bibliographystyle{nunchux}

\appendix
\crefalias{section}{appendix}
\crefalias{subsection}{appendix}
\section{Evaluation Protocol Details}
\label{sec:appendix-protocol}

\paragraph{Generation settings.}
Wan2.2 generates 81 frames at 720p, LongCat-Video 93 frames at 480\texttimes832, HunyuanVideo-1.5 121 frames at 720p, and MiniMax-H3 243 frames at 1344\texttimes768 with synchronized audio.
The four settings of \cref{tab:main} therefore run 75.6K, 37.4K, 111.7K, and 73.5K attention tokens, the last counting the video, audio, and text tokens MiniMax-H3 packs into one sequence.
All methods within a setting share the same prompts and random seeds, following the prompt configuration released with Self-Forcing~\citep{huang2025selfforcing}, as adopted by recent low-bit video generation work~\citep{xi2026quantvideogen}.

\paragraph{Sampling configuration.}
Wan2.2 uses 40 denoising steps, and LongCat-Video and HunyuanVideo-1.5 use 50.
MiniMax-H3 runs with the 8-step Parallel Decoding Distillation LoRA released by Alibaba PAI~\citep{shaul2026pdd,alibabapai2026h3acc}, using the Euler solver and no classifier-free guidance.

\paragraph{Metrics.}
Peak Signal-to-Noise Ratio (PSNR), Structural Similarity Index Measure (SSIM), and Learned Perceptual Image Patch Similarity (LPIPS) are computed against the BF16 FlashAttention-4 output of the same model at the same seed.
Attention speedup is the FlashAttention-4 attention time divided by that of the evaluated method.
Both are timed as isolated kernel launches on the captured post-RoPE tensors of that setting rather than inside the generation loop, as CUPTI activity over a ten-second window after twenty seconds of warm-up, summarized by the per-launch median.
Quantization, token grouping, and the rest of the preprocessing run outside that timer; the end-to-end number prices them.
End-to-end speedup is the wall clock of generating one clip, CUDA-synchronized around the call, counting prompt processing, denoising, and VAE decode but not model loading.
All latencies are measured on a single exclusive GPU with warm-up discarded, while accuracy runs use concurrency and are never timed.

\paragraph{Baseline kernels.}
FlashAttention-4~\citep{zadouri2026flashattention4} builds and runs on every card we report, so it is the single BF16 reference for fidelity and the single basis for speedup, measured on the same card as the method it normalizes.
SageAttention2 ships no Blackwell kernel, so its B200 numbers come from its kernel recompiled for \texttt{sm\_100a}.
For all baselines we use their official configurations where they exist.

\paragraph{Scope of ExpCast-FP8.}
The 4-bit kernel stores $\mathbf{P}$ as NVFP4, that is, E2M1 elements under a per-16 E4M3 microscale.
No single affine log-domain-to-code map exists in that format, since the construction in \cref{eq:direct-fp8} relies on E4M3's three mantissa bits and fixed exponent bias.
ExpCast-FP8 is therefore enabled only in the 8-bit kernel, and only on B200 and H200, where the low-bit matrix multiplications outpace the scalar softmax pipeline.

\paragraph{Grouping schedule.}
For every model, value grouping and demeaning run on the first \GroupingStepFraction of the denoising steps and on every layer of those steps.
The window is a fraction of the schedule and not a set of layers, which is also what the kernel implements, as it is keyed on the step index alone.
Within the window, attention layouts change little between adjacent denoising steps~\citep{xi2025sparse,li2025radial}, so the permutation is computed once and reused for \PermutationReuseSteps adjacent steps.
A 40-step Wan2.2 run therefore recomputes it at steps 0, 4, and 8, and a 50-step LongCat-Video run at steps 0, 4, 8, and 12.
The 8-step MiniMax-H3 setting takes the same rule rather than a fixed count, deriving its two-step window as $\lceil n/4 \rceil$ from the step count the pipeline publishes.

\section{Fused Preprocessing, Operand by Operand}
\label{sec:appendix-fusion}

\cref{sec:method-additional} fuses the preprocessing chain into the quantization kernels in three stages, and \cref{fig:prep-fusion} prices them in the same order.
The first stage fuses the quantizers, the padding, and the $\QK$ Hadamard rotation.
The second fuses the two smoothing means, the key channel mean and the per-block value mean, together with the gather by $\pi$.
The third fuses the rotary embedding, after which no high-precision intermediate is written to HBM.
The final rung of \cref{fig:prep-fusion} fuses nothing further and instead hand-writes the five resulting passes and the grouping kernel in CuTe/CUDA.
Per operand, the fused kernels do the following.
$\mathbf Q$ fuses RoPE, Hadamard rotation, and quantization.
$\mathbf K$ fuses the centering statistics, RoPE, the gather by $\pi$, Hadamard rotation, mean subtraction, and quantization.
$\mathbf V$ fuses the gather, block demeaning, and quantization, and writes the block means alongside the residual codes.
At 8 bits the means are stored divided by the per-channel value scale, so the kernel's single epilogue multiply applies to both terms of \cref{eq:vc-online-update}.
At 4 bits the NVFP4 microscales are applied inside the matrix instruction, so the means are added unscaled.

\section{Where the ExpCast-FP8 Constant Comes From}
\label{sec:appendix-fp8-error}

\begin{figure}[H]
  \centering
  \includegraphics{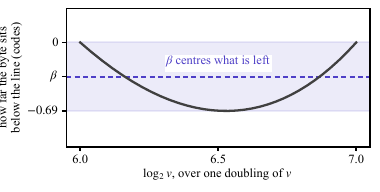}
  \caption{\textbf{Where $\beta$ comes from.}
  The signed offset $\varepsilon(m)$ the byte carries from $8\log_2 v+56$, over one doubling.
  Up to that sign it is Mitchell's straight-line error scaled to E4M3 codes, so the byte never sits above the line.
  Its magnitude never reaches one code, and halving the range $[-0.6886,0]$ gives the minimax shift \BetaExact, which \cref{eq:direct-fp8} rounds to $\beta=\BetaUsed$.
  The band is the $\pm$0.34-code envelope that shift centres.}
  \label{fig:expcast-beta}
\end{figure}

\section{Proof of Proposition~\ref{prop:direct-fp8-row}}
\label{sec:appendix-fp8}

\begin{proof}
We give the calculation behind \cref{eq:direct-fp8-row-bound}.
Let $w_k=2^{u_k}$ and $p_k=w_k/\sum_t w_t$ be one exact softmax row.
Decoding the E4M3 code in \cref{eq:direct-fp8} and removing its common $2^8$ scale gives $\widehat{w}_k$.
Write $z_k=u_k+8=n_k+f_k$ with $f_k\in[0,1)$.
For a normal entry, the exact mantissa coordinate is $8(2^{f_k}-1)$, while the direct map uses $8f_k$.
Their difference
\begin{equation}
  e(f)=8\bigl[f-(2^f-1)\bigr]
  \label{eq:appendix-fp8-code-error}
\end{equation}
has range $[0,0.6886]$, which is Mitchell's straight-line approximation error of $\log_2$ scaled by the eight codes a doubling spans~\citep{mitchell1962computer}, and equals $-\varepsilon(m)$ of \cref{sec:method-softmax}.
Its minimax constant shift is therefore $-0.6886/2=-0.3443$, for which we use the fixed $\beta=-0.35$.
The resulting decoded-to-exact ratio is
\begin{equation}
  r(f_k)=\frac{\widehat{w}_k}{w_k}
  =\frac{1+\operatorname{Round}(8f_k-0.35)/8}{2^{f_k}}.
  \label{eq:appendix-fp8-ratio-function}
\end{equation}
The ratio is monotone between rounding cells.
Evaluating both one-sided limits at the rounding boundaries $f=(j+1/2+0.35)/8$, including the carry cell $j=8$, gives
\begin{equation}
  0.9290\leq r(f)\leq1.0746.
  \label{eq:appendix-fp8-ratio}
\end{equation}

\paragraph{Proof of the row bound.}
Set $a=0.9290$, $b=1.0746$, and write $r_k=\widehat{w}_k/w_k\in[a,b]$.
After normalization, $\widehat p_k=p_k r_k/\mathbb{E}_{\mathbf p}[r]$.
The maximum total variation over this interval is attained by an endpoint reweighting.
If a fraction $q$ of the $\mathbf p$-mass receives $b$ and the rest receives $a$, then
\begin{equation}
  \operatorname{TV}(\mathbf p,\widehat{\mathbf p})
  =\frac{q(1-q)(b-a)}{a+q(b-a)}.
  \label{eq:appendix-fp8-tv-q}
\end{equation}
The maximizer is $q=\sqrt{a}/(\sqrt a+\sqrt b)$, yielding
\begin{equation}
  \operatorname{TV}(\mathbf p,\widehat{\mathbf p})
  \leq\frac{\sqrt b-\sqrt a}{\sqrt b+\sqrt a}
  <0.0364.
  \label{eq:appendix-fp8-tv}
\end{equation}
As a direct corollary, $\|\mathbf p-\widehat{\mathbf p}\|_1<0.0728$.

For completeness, let $\mathcal T=\{k:u_k<-14\}$ and $\tau=\max\{p(\mathcal T),\widehat p(\mathcal T)\}$.
Conditioning both rows on $\mathcal T^c$ and applying the preceding argument gives
\begin{equation}
  \operatorname{TV}(\mathbf p,\widehat{\mathbf p})<0.0364+\tau.
  \label{eq:appendix-fp8-tail}
\end{equation}
Finally, for $\mathbf o=\mathbf p^\top\mathbf V'$ and $\widehat{\mathbf o}=\widehat{\mathbf p}^{\top}\mathbf V'$, the coupling form of total variation gives
\begin{equation}
  \|\widehat{\mathbf o}-\mathbf o\|_2
  \leq(0.0364+\tau)\operatorname{diam}_2(\mathbf V'),\qquad
  \operatorname{diam}_2(\mathbf V')=\max_{a,b}
  \|\mathbf v'_a-\mathbf v'_b\|_2.
  \label{eq:appendix-fp8-output}
\end{equation}
This bound isolates the probability path.
Residual-value quantization adds a separate error term.
\end{proof}

\section{Qualitative Results}
\label{sec:appendix-qualitative}

Each strip below holds one clip rendered three times: FlashAttention-4 in BF16,
SageAttention2, and \method, from the same prompt and the same seed. The five
frames are cut at the same source indices in all three rows, so a difference
down a column is the method and not the sampling. PSNR is that clip alone
against its own FlashAttention-4 render, over every frame. The first four strips
run on one B200 in the deployed 8-bit configuration against SageAttention2; the
four that follow run on workstation Blackwell in the deployed 4-bit
configuration against SageAttention3.

\newlength{\qualwidth}
\setlength{\qualwidth}{0.90\linewidth}
\newcommand{\qualrow}[2]{%
  \lab{#1}{#2}\\[0.5pt]}
\newcommand{\qualimg}[1]{%
  \includegraphics[width=\qualwidth]{Figures/qualitative/#1.jpg}}
\newcommand{\qualstrip}[5]{%
  \par\addvspace{3pt}%
  \noindent\hfill\begin{minipage}{\qualwidth}
    \centering\scriptsize
    \qualrow{FlashAttention-4 (BF16)}{\metric{PSNR}{reference}}
    \qualimg{#1_r0}\\[2.5pt]
    \qualrow{SageAttention2 (8-bit)}{\metric{PSNR}{#2\,dB}}
    \qualimg{#1_r1}\\[2.5pt]
    \qualrow{\best{\method}~\textbf{(8-bit)}}{\best{\metricb{PSNR}{#3\,dB}}}
    \qualimg{#1_r2}\\[1pt]
    \parbox{\qualwidth}{\scriptsize\itshape #4\par}
    \captionsetup{skip=3pt,hypcap=false}\captionof{figure}{#5}\label{fig:qual-#1}
  \end{minipage}\hfill\null%
  \par\addvspace{3pt}%
}
\newcommand{\qualstripws}[5]{%
  \par\addvspace{3pt}%
  \noindent\hfill\begin{minipage}{\qualwidth}
    \centering\scriptsize
    \qualrow{FlashAttention-4 (BF16)}{\metric{PSNR}{reference}}
    \qualimg{#1_r0}\\[2.5pt]
    \qualrow{SageAttention3 (4-bit)}{\metric{PSNR}{#2\,dB}}
    \qualimg{#1_r1}\\[2.5pt]
    \qualrow{\best{\method}~\textbf{(4-bit)}}{\best{\metricb{PSNR}{#3\,dB}}}
    \qualimg{#1_r2}\\[1pt]
    \parbox{\qualwidth}{\scriptsize\itshape #4\par}
    \captionsetup{skip=3pt,hypcap=false}\captionof{figure}{#5}\label{fig:qual-#1}
  \end{minipage}\hfill\null%
  \par\addvspace{3pt}%
}

\qualstrip{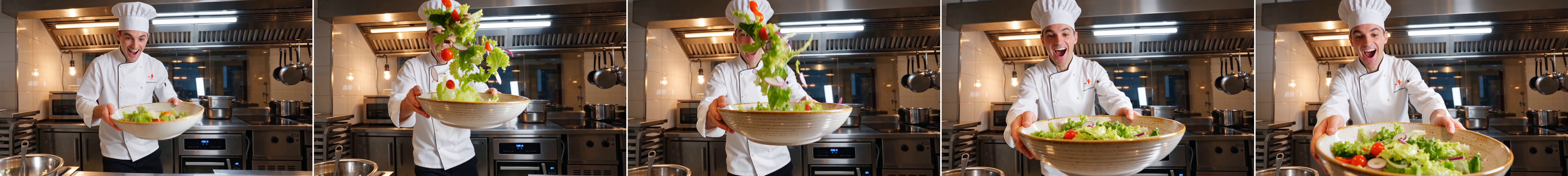}{10.88}{13.51}{%
  A dynamic scene captured in the style of a vibrant food photography shoot,
  showcasing a chef expertly tossing a salad in a large ceramic bowl. The chef,
  with a lively expression and focused intensity, moves with grace and
  precision, the salad spinning gracefully in the air before landing back in the
  bowl. \dots\ A mid-shot from a slightly elevated angle.}{8-bit comparison on Wan2.2-T2V-A14B, one B200.}

\qualstrip{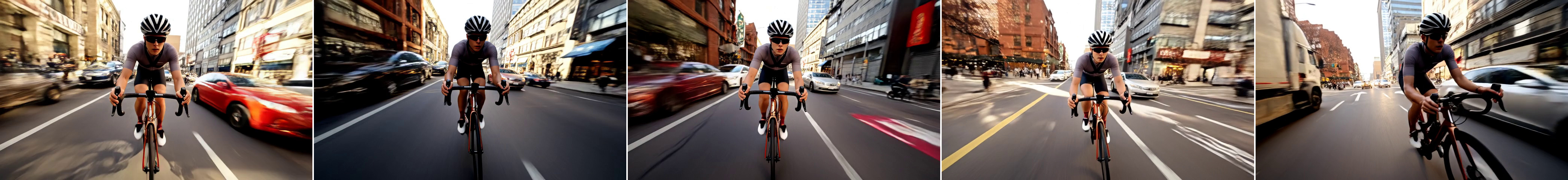}{14.96}{17.25}{%
  A dynamic first-person view of a cyclist navigating through a bustling city
  street, weaving skillfully between traffic and pedestrians. The cyclist is a
  young adult, wearing a helmet and a casual cycling jersey, pedaling
  energetically with a determined expression. \dots\ A close-up shot from a
  first-person perspective, emphasizing the cyclist's motion.}{8-bit comparison on LongCat-Video, one B200.}

\qualstrip{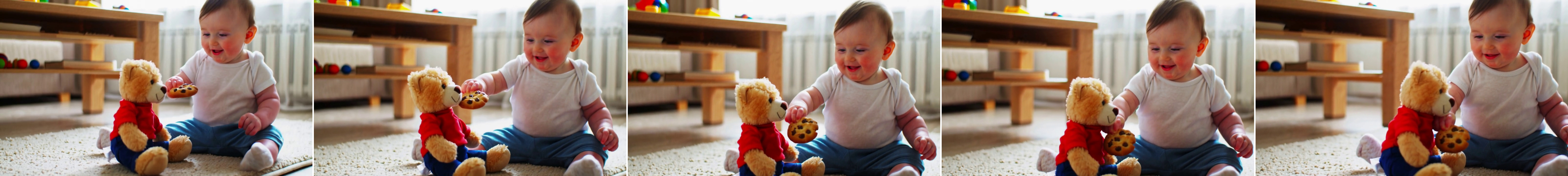}{15.02}{18.24}{%
  A charming photograph in a soft, warm lighting style, capturing a toddler
  sitting on a cozy carpet, happily sharing a chocolate chip cookie with a cute
  teddy bear. The toddler has rosy cheeks, big bright eyes, and a gentle smile,
  reaching out to offer the cookie to the bear, which leans in to accept it.
  \dots\ A medium shot with a slight angle.}{8-bit comparison on HunyuanVideo-1.5, one B200.}

\qualstrip{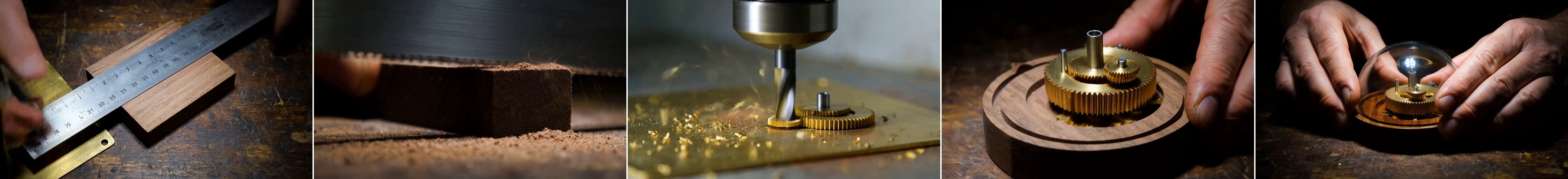}{20.82}{23.04}{%
  [Shot 1] Cinematic, an overhead extreme close-up pushing in on a dark,
  distressed wooden workbench illuminated by a single warm overhead spotlight.
  [Shot 2] A macro shot from a low angle tracks right, following a Japanese pull
  saw slicing through the walnut block. \dots\ [Shot 6] A medium shot zooms out
  as a domed glass cover clicks into the wooden groove, the brass gears turning
  behind the glass. \textnormal{(six shots in all.)}}{8-bit comparison on MiniMax-H3, one B200.}

\qualstripws{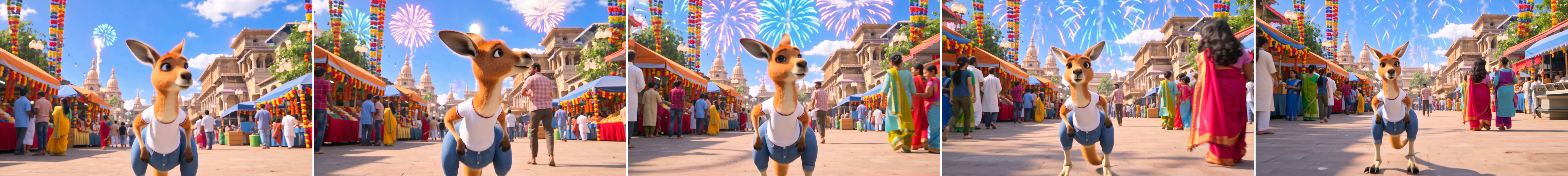}{10.07}{16.60}{%
  An adorable kangaroo wearing blue jeans and a white t-shirt takes a pleasant
  stroll in Mumbai, India, during a vibrant and colorful festival. The kangaroo
  has soft, fluffy fur and a friendly expression, looking around curiously at
  the bustling crowd. \dots\ A medium shot from a slightly elevated angle,
  emphasizing the kangaroo's natural movements.}{4-bit comparison on Wan2.2-T2V-A14B, one RTX 5090.}

\qualstripws{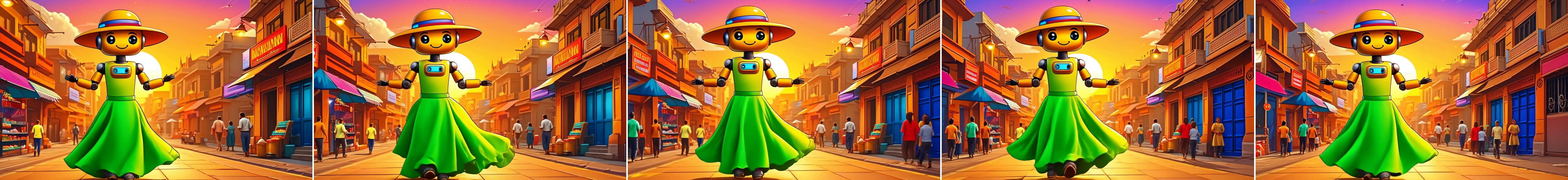}{11.43}{17.26}{%
  A vibrant illustration in the style of a modern comic book depicting a toy
  robot wearing a flowing green dress and a cheerful sun hat taking a pleasant
  stroll through the bustling streets of Mumbai, India, during a beautiful
  sunset. \dots\ A medium shot from a slightly elevated angle, capturing the
  robot's joyful movement.}{4-bit comparison on LongCat-Video, one RTX PRO 6000.}

\qualstripws{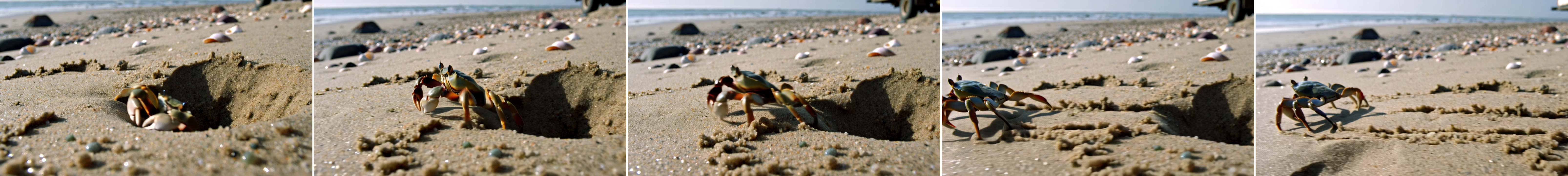}{10.69}{15.59}{%
  A documentary-style nature photography shot from a camera truck moving to the
  left, capturing a crab quickly scurrying into its burrow. The crab has a hard,
  greenish-brown shell and long claws, moving with determined speed across the
  sandy ground. \dots\ A close-up shot from a slightly elevated angle.}{4-bit comparison on HunyuanVideo-1.5, one RTX 5090.}

\qualstripws{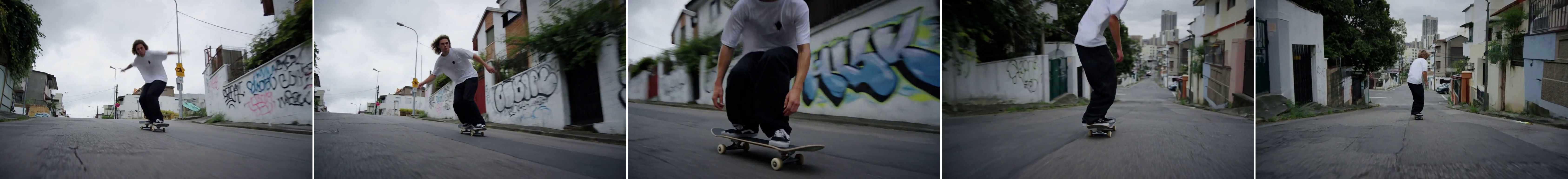}{16.32}{17.99}{%
  A skateboarding scene in a dynamic street style, capturing a young
  skateboarder accelerating down a steep hill. The skateboarder, with a
  determined expression, is in mid-air, performing a kickflip maneuver, gaining
  speed rapidly. \dots\ The camera angle is from below, emphasizing the
  skateboarder's momentum and the steep incline of the hill.}{4-bit comparison on MiniMax-H3, one RTX PRO 6000.}

\end{document}